\theoremstyle{plain}
\titleformat{\section}[block]{\color{black}\Large\bfseries}{\thesection}{1em}{}
\titleformat{\subsection}[hang]{\color{black}\large\bfseries}{\thesubsection}{1em}{}
\titleformat{\subsubsection}[hang]{\color{black}\large\bfseries}{\thesubsubsection}{1em}{}
\theoremstyle{plain}
\newtheorem{thm}{Theorem}
\newtheorem{theorem}[thm]{Theorem}
\newaliascnt{prop}{thm}
\newtheorem{prop}[prop]{Proposition}
\newaliascnt{lem}{thm}
\newtheorem{lem}[lem]{Lemma}
\newtheorem{lemma}[lem]{Lemma}
\newaliascnt{cor}{thm}
\newtheorem{cor}[cor]{Corollary}
\newtheorem{assumption}{Assumption}
\newtheorem*{assumption*}{Assumption}
\newtheorem{defn}[thm]{Definition}
\newtheorem*{conj*}{Conjecture}
\newtheorem*{defn*}{Definition}
\newtheorem*{note*}{Notation}
\newtheorem*{fact*}{Fact}
\newtheorem*{ques*}{Question}
\newtheorem*{exer*}{Exercise}
\newtheorem*{prob*}{Problem}
\newtheorem*{algo*}{Algorithm}
\renewcommand{\qedsymbol}{$\blacksquare$} 
\Crefname{defn}{Definition}{Definitions}
\Crefname{definition}{Definition}{Definitions}
\Crefname{rmk}{Remark}{Remarks}
\Crefname{prop}{Proposition}{Propositions}
\Crefname{thm}{Theorem}{Theorems}
\Crefname{theorem}{Theorem}{Theorems}
\Crefname{cor}{Corollary}{Corollaries}
\Crefname{lem}{Lemma}{Lemmas}
\Crefname{lemma}{Lemma}{Lemmas}
\Crefname{algo}{Algorithm}{Algorithms}
\Crefname{ex}{Example}{Examples}
\Crefname{answer}{Answer}{Answers}
\Crefname{ques}{Question}{Questions}
\Crefname{prob}{Problem}{Problems}
\Crefname{assumption}{Assumption}{Assumptions}
\Crefname{note}{Notation}{Notations}
\Crefname{fact}{Fact}{Facts}
\Crefname{exer}{Exercise}{Exercises}
\Crefname{conj}{Conjecture}{Conjectures}
\Crefname{claim}{Claim}{Claims}
\Crefname{figure}{Figure}{Figures}
\Crefname{subsection}{Subsection}{Subsections}
\Crefname{section}{Section}{Sections}
\Crefname{table}{Table}{Tables}
\Crefname{algocf}{Algorithm}{Algorithms}
\crefname{algocfproc}{Algorithm}{Algorithms}
\patchcmd{\algorithmic}{\addtolength{\ALC@tlm}{\leftmargin} }{\addtolength{\ALC@tlm}{\leftmargin}}{}{}
\newcommand{\nonl}{\renewcommand{\nl}{\let\nl}}
\newcommand\numberthis{\addtocounter{equation}{1}\tag{\theequation}}
\let\cref@old@stepcounter\stepcounter
\def\stepcounter#1{%
  \cref@old@stepcounter{#1}%
  \cref@constructprefix{#1}{\cref@result}%
  \@ifundefined{cref@#1@alias}%
    {\def\@tempa{#1}}%
    {\def\@tempa{\csname cref@#1@alias\endcsname}}%
  \protected@edef\cref@currentlabel{%
    [\@tempa][\arabic{#1}][\cref@result]%
    \csname p@#1\endcsname\csname the#1\endcsname}}
\newcommand{\mytag}[2]{%
  \text{#1}%
  \@bsphack
  \begingroup
    \@onelevel@sanitize\@currentlabelname
    \edef\@currentlabelname{%
      \expandafter\strip@period\@currentlabelname\relax.\relax\@@@%
    }%
    \protected@write\@auxout{}{%
      \string\newlabel{#2}{%
        {#1}%
        {\thepage}%
        {\@currentlabelname}%
        {\@currentHref}{}%
      }%
    }%
  \endgroup
  \@esphack
}
\definecolor{aqua}{rgb}{0.0, 1.0, 1.0}
\definecolor{caribbeangreen}{rgb}{0.0, 0.8, 0.6}
\definecolor{azure}{rgb}{0.0, 0.5, 1.0}
\definecolor{charcoal}{rgb}{0.21, 0.27, 0.31}
\def\clearwf{\par{\count@\c@WF@wrappedlines\zz}\par}
\def\zz{{%
\ifnum\count@>\@ne
\noindent\mbox{zz}\\%
\advance\count@\m@ne
\expandafter\zz
\else
\ifhmode\unskip\unpenalty\fi
\fi}}
            \renewcommand{\thesubsection}{\thesection.\arabic{subsection}}
            \crefname{section}{Appendix}{Appendices}
            \Crefname{section}{Appendix}{Appendices}
\title{
An Efficient Variant of One-Class SVM\\
with Lifelong Online Learning Guarantees
}
\author{
	Joe Suk\thanks{Corresponding author; work done while at Columbia University.}\\
New York University\\
\href{mailto:j.suk@nyu.edu}{{ \texttt{j.suk@nyu.edu}}}%
\And
Samory Kpotufe\\
Columbia University\\
\href{mailto:samory@columbia.edu}{{ \texttt{samory@columbia.edu}}}%
}
\date{}
\begin{document}
\maketitle

\begin{abstract}
We study outlier (a.k.a., anomaly) detection for single-pass non-stationary streaming data.
In the well-studied offline or batch outlier detection problem, traditional methods such as kernel One-Class SVM (OCSVM) are both computationally heavy and prone to large false-negative (Type II) errors under non-stationarity.
To remedy this, we introduce SONAR, an efficient SGD-based OCSVM solver with strongly convex regularization.
We show novel theoretical guarantees on the Type I/II errors of SONAR, superior to those known for OCSVM, and further prove that SONAR ensures favorable lifelong learning guarantees under benign distribution shifts.
In the more challenging problem of adversarial non-stationary data, we show that SONAR can be used within an ensemble method and equipped with changepoint detection to
achieve {\em adaptive} guarantees, ensuring small Type I/II errors on each phase of data.
We validate our theoretical findings on synthetic and real-world datasets.
\end{abstract}

\section{Introduction}

We study the problem of outlier detection (a.k.a., novelty or anomaly detection) in a one-pass streaming setting where data arrive in a sequential or online manner. 
A key motivating example here is the real-life application of ``Internet of Things'' (IoT), where data consists of network traffic encodings from smart devices which carry signals of unusual or malicious activity.
In such applications, one aims to detect anomalous activity for security and threat identification.
For deployment of algorithms in the IoT problem, a common goal is to design memory and computation efficient algorithms, without needing to store or analyze large datasets, as typical smart devices are resource constrained.

In real-life applications, the underlying environment changes over time due to seasonalities or because streamlined software needs to be deployed across different modalities.
This necessitates procedures which adapt to non-stationary environments, or ensure {\em lifelong learning}, transferring knowledge across different data distributions.
While supervised transfer learning and non-stationary online learning have been well-studied in the literature, the particular problem of {\em non-stationary online outlier detection} has received less attention.
Additionally, the outlier detection problem poses unique challenges beyond supervised classification settings as the learner may not have access to outlier data, which can be adversarial in nature and change as the normal data distribution changes.

As is standard in works on outlier detection, our goal is to design procedures which adhere to the Neyman-Pearson framework: ensure small Type I error, or error of misclassifying normal data, while also maintaining small Type II error on unknown outlier data.
Our work rigorously proves such guarantees for a broad class of normal and outlier data distributions with supporting verifications of our theory on both synthetic and real-world IoT datasets.

We focus on a class of procedures based on the One-Class Support Vector Machine (OCSVM) \citep{scholkopf99} which is a common technique for offline outlier detection inspired by Support-Vector-Machines (SVMs) from supervised learning.
In particular, OCSVM has found widespread usage for outlier detection in IoT settings \citep{shilton2015dp1svm,lee2016packet,mahdavinejad2018machine,al2020unsupervised,razzak2020randomized}.

A first attempt at handling the streaming problem is to note OCSVM's objective function has a penalty-based formulation amenable to stochastic gradient descent (SGD).
However, since decision boundaries between normal and outlier data are unlikely to be linear for complex datasets, it's standard to use a kernel trick within OCSVM to embed the data into an infinite-dimensional reproducing kernel Hilbert space (RKHS).
This in turn poses several challenges for the penalty-based formulation, as the gradient updates of SGD depend on the gram matrix of the entire dataset, which is inaccessible in a streaming setting.

Instead, using Random Fourier Features \citep{rahimi07} for kernel approximation allow us to work in a finite-dimensional Euclidean approximation of the RKHS, where gradient updates can be made with one-pass streaming.

However, an additional challenge for theoretical analysis is that the objective function of OCSVM is not strongly convex, disallowing standard convergence analyses of the SGD iterates to the population minimizer.
This makes it tricky to conduct a rigorous analysis of the Type I and II errors of the associated SGD iterates.


We resolve this issue by introducing a new strongly convex variant of the OCSVM objective which is also amenable to SGD when using Random Fourier Features.
Going further, we find that our new objective's solution in fact admits stronger lifelong learning guarantees than the original OCSVM objective, even under unknown changing data distributions.
Our contributions are as follows:


\subsection{Contributions}

\begin{enumerate}[(a)]
	\item We first propose a new regularized objective which modifies the well-known One-Class SVM (OCSVM) optimization problem.
	We establish novel theoretical guarantees on the optimizer of this objective in terms of Type I error and learned margin.
	\item We next analyze SGD combined with random Fourier features (which we dub \sonar or \bld{S}GD-based \bld{O}ne-class \bld{N}ovelty detection with \bld{A}pproximate \bld{R}BFs)  to solve the regularized objective proposed in (a).
	We prove the final iterate of \sonar has guarantees closely matching those of the population optimizer in (a) above.
	\item We next show that \sonar in fact has transfer learning guarantees on the Type I and II errors, namely that it can benefit from the Type I and II errors of the initial iterate.
	We show how such guarantees induce robustness against mild non-stationarity in the environment.
	\item Finally, we propose a modification \sonarc (SONAR with Changepoint Detection) of \sonar for non-stationary environments which consists of running a changepoint detection on base learners, each of which is a copy of \sonar restarted at different dyadic frequencies.
	We show that \sonarc automatically adapts to unknown changepoints and closely matches the guarantees of an oracle procedure which runs \sonar and restarts at changepoints.
	\item We present experimental results analyzing SGD-based OCSVM, \sonar, and \sonarc on synthetic and real-world data, validating our theoretical findings.
\end{enumerate}

\section{Related Works}

\paragraph*{Outlier Detection in Changing Environments.}
Prior approaches to outlier detection in non-stationary data streams consist of sliding-window techniques, ensemble methods, and data weighting \citep{Tan2011Fast,KrawczykWozniak2015,Salehi2018Survey}.
However, theoretical guarantees largely remain unknown for this challenging setting.

In the offline setting, recent works have studied the Neyman-Pearson framework for outlier detection under distribution shift, establishing theoretical guarantees under various types of shifts \citep{kalan24,Kalan2025NP,Kalan2025Transfer}.
However, these works assume knowledge of source and target data, obviating the need for changepoint detection.

\paragraph*{Works on OCSVM.}
For theoretical guarantees, \citet{scholkopf99} showed the first generalization error bounds for OCSVM, appealing to the classical margin-based analysis of SVM \citep{Bartlett99}.
Later, \citet{vert05} showed consistency results (i.e., that OCSVM asymptotically learns a level set), for a Gaussian kernel with sample-dependent bandwidth.
Related to our work, \citet{yang22} also study the use of kernel data compression (via Nystr\"{o}m sketches) to reduce the time and space evaluation complexity of OCSVM.


OCSVM in $d$-dimensional Euclidean space is also equivalent to the {\em Minimum Enclosing Ball (MEB) problem} \citep{Megiddo1983Linear}, a well-studied computational geometry problem.
In the one-pass setting, \citet{zarrabi06} achieve a $3/2$ approximation ratio on the radius of the enclosing ball.
This approximation factor was later improved with accompanying lower bounds showing any algorithm using $\poly(d)$ space must incur a constant approximation ratio \citep{agarwal10,chan14}.
In multi-pass streaming settings, works on the broader online coreset problem showed tighter approximation factors are possible \citep{Badoiu2008Optimal}.
\citet{krivosija19} study a {\em probabilistic MEB} problem in an RKHS, where the goal is to minimize expected distance to heterogeneous randomly sampled datapoints, achieving a $1+\eps$ approximation ratio using $O(dn \poly(1/\eps))$ total training time for multi-pass streaming data.


We note that in the one-pass streaming setting, a constant approximation ratio on the radius of the enclosing ball
yields a more conservative decision boundary and thus a larger Type II error than OCSVM.
To contrast, our method (\sonar) achieves a smaller Type II error than OCSVM for alternatives near the decision boundary.

\section{Setup and Background on OCSVM and Random Fourier Features}


\begin{wrapfigure}{r}{0.4\textwidth}
    \centering
    \vspace{-3.5em}
    \includegraphics[width=0.8\textwidth]{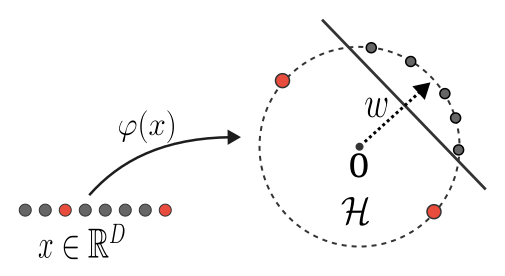}
    \caption{\small OCSVM maps data points $x \in \mb{R}^D$ as $\vphi(x)$ to a sphere in RKHS $\mc{H}$, inducing linear separation between normal points (gray) and yet unseen novel points (in red).}
    \label{fig:ocsvm}
\end{wrapfigure}

\subsection{Preliminaries}
Let $X_1,\ldots,X_T$ be our streaming data over $T$ steps, lying in domain $\mc{X} \subset \mb{R}^D$.
We use $\mc{H}$ to denote an RKHS with kernel $K(\cdot,\cdot)$ and kernel embedding $\vphi: \mc{X} \to \mc{H}$.
We assume the kernel is shift-invariant $K(x,y) = K(x-y)$ and that $K(x,x) = \langle \vphi(x),\vphi(x) \rangle_{\mc{H}}$ is a constant, as is the case for a Gaussian kernel.
This means $\vphi(x)$ lies in the unit sphere in $\mc{H}$ with respect to the RKHS norm.

\begin{defn}[Classification Criterion]
	A unit-norm RKHS element $w \in \mc{H}$ and threshold $c_0 \in \mb{R}$ are associated to a hyperplane in $\mc{H}$.
A test point $x \in \mb{R}^D$ is deemed an {\bf outlier} if $\vphi(x)$ and the origin are on the same side of the hyperplane, or if $\langle w, \vphi(x) \rangle = w(x) < c_0$.
This induces a classifier $h_w(x) := \pmb{1}\{ w(x) < c_0\}$ on outlier detection.
\end{defn}

\subsection{Background on OCSVM}
\paragraph*{Defining the OCSVM Objective.}
We first review the OCSVM objective, as originally formulated in \citet{scholkopf99,scholkopf01}.
OCSVM finds the maximum margin hyperplane separating the observed embedded data $\{\vphi(X_t)\}_{t=1}^T$ from the origin in the unit sphere of the RKHS $\mc{H}$.
This is equivalent to finding the minimal diameter spherical cap containing the data or:
\begin{equation}\label{eq:objective-radius}
	\min_{r > 0 ,w_c \in \mc{H}} r^2 \hspace{0.5em} \text{ s.t. }\hspace{0.5em} \forall t \in [T]: \|\vphi(X_t) - w_c \|_{\mc{H}}^2 \leq r^2 \, ,
\end{equation}
where $w_c$ denotes the center of the spherical cap in question.
We can also phrase the above as a margin maximization problem\footnote{The radius $r$ in \Cref{eq:objective-radius} corresponds to $\sqrt{2 - 2\rho}$ in terms of the offset $\rho$ in \Cref{eq:objective-normalized}.}:
\begin{equation}\label{eq:objective-normalized}
\max_{w,\rho}\, \rho  \hspace{0.5em} \text{ s.t }\hspace{0.5em} \forall t \in [T]: \langle w, \vphi(X_t) \rangle \geq \rho, \|w\|_{\mc{H} }= 1,
\end{equation}
where $w$ is the unit-norm normal vector to the separating hyperplane and $\rho$ is the threshold on the inner product for normal data.
The optimizer $w^*$ solving \Cref{eq:objective-normalized} geometrically corresponds to a hyperplane of maximal distance from the origin enclosing the data as depicted in \Cref{fig:ocsvm}.

\citet{scholkopf01} also consider the simpler objective without a norm constraint on $w$:
\begin{equation}\label{eq:objective-standard}
	\min_{w,\rho} \half \|w\|_{\mc{H}}^2 - \rho \suchthatshort \forall t \in [T]: \langle w, \vphi(X_t)\rangle \geq \rho \, .
\end{equation}
In fact the above
is equivalent to \Cref{eq:objective-normalized}, as the distance of a hyperplane induced by a pair $(w,\rho)$ from the origin is $\rho / \|w\|$ \citep[Prop 2]{scholkopf01}.
The reader is referred to Section 2.1 of \citet{tax2004} for further discussion on the equivalence of \Cref{eq:objective-standard} and \Cref{eq:objective-normalized}.

%

As is the case for SVMs, we can also formulate the soft version of \Cref{eq:objective-standard} with slack variables:
\begin{align}\label{eq:objective-slack}
	\min_{w,\rho,\xi} \half \| w\|_{\mc{H}}^2 - \rho + \frac{1}{\lambda \cdot T} \sum_{t=1}^T \xi_t  \nonumber \suchthatshort \langle w,\vphi(X_t) \rangle_{\mc{H}} \geq \rho - \xi_t, \forall t\in [T]: \xi_t \geq 0 \, . \numberthis
\end{align}
Here, the parameter $\lambda$ is user-set and, in theory, an upper bound on the fraction of outliers as we expect outliers to violate the constraint $\langle w, \vphi(X)\rangle \geq \rho$ a $\lambda$ proportion of the time.
Proposition 4 of \citet{scholkopf01} shows that $\lambda$
is asymptotically equal to the proportion of outliers with i.i.d. continuous data.


\paragraph*{Putting the OCSVM Objective in ERM Form.}
We note the optimal slack values in \Cref{eq:objective-slack} are
$
	\xi_t = (\rho - \langle w, \vphi (X_t) \rangle_{\mc{H}})_+ \, ,
$
meaning we can simplify \Cref{eq:objective-slack} using a penalty term:
\begin{equation}\label{eq:objective-unconstrained}
	\min_{w,\rho} \half \|w\|_{\mc{H}}^2 - \rho + \frac{1}{\lambda \cdot T} \sum_{t=1}^T \left( \rho - \langle w, \vphi(X_t) \rangle_{\mc{H}} \right)_+,
\end{equation}
Now, we've arrived at an objective which is in an empirical risk minimization form and is thus amenable to SGD in the streaming setting.
However, a major issue is that for infinite-dimensional RKHS's (e.g.,  that of a Gaussian kernel) we don't have access to the Hilbert space norm $\|\cdot\|_{\mc{H}}$.

In the batch setting, to get around this the kernel OCSVM objective is instead typically solved by either (a) using the dual formulation of \Cref{eq:objective-unconstrained} or (b) using the representer theorem to constrain our attention to solutions in a finite dimensional subspace of $\mc{H}$.
However, both of these approaches pose the same challenge in the streaming setting.

\paragraph*{Challenge of Streaming Setting.}
Going into more detail on the second approach, the representer theorem gives $w(x) = \sum_{t=1}^T \beta_t \cdot K(x, X_t)$ for coefficients $\beta_t$ so that $\|w\|_{\mc{H}}^2 = \sum_{i,j}^T \beta_i \beta_j K(X_i,X_j)$ \citep{chapelle07}.
Then, the objective \Cref{eq:objective-unconstrained} becomes:
\begin{equation}\label{eq:dual}
	\min_{\beta,\rho} \half \beta^T \bld{K}_T \beta + \frac{1}{\lambda T} \sum_{t=1}^T ( \rho - \bld{k}_t \cdot \pmb{\beta} )_+ - \rho,
\end{equation}
where $\bld{K}_T = (K(X_i,X_j))_{1 \leq i,j\leq T}$ is the $T \times T$ gram matrix, $\bld{k}_t$ is the $t$-th column of $\bld{K}_T$, and $\pmb{\beta} := (\beta_1,\ldots,\beta_T)$.
While \Cref{eq:dual} is also amenable to SGD, it is no longer sensible in the streaming setting as $\bld{K}_T$ and $\bld{k}_t$ depend on future datapoints, disallowing unbiased single-sample subgradient estimates.
The approach of solving \Cref{eq:objective-unconstrained} via the dual introduces an identical issue.
While we could solve \Cref{eq:dual} at each timestep $T=t$ thus reducing to the offline problem, but this would require prohibitive memory and runtime.

To get around this data dependency issue, we first curiously note this key issue does not arise for the simpler linear OCSVM problem with a Euclidean kernel.
In such a case, the primal \Cref{eq:objective-unconstrained} simplifies to:
\begin{equation}\label{eq:objective-euclidean}
	\min_{w,\rho} \half w^Tw - \rho + \frac{1}{\lambda \cdot T} \sum_{t=1}^T ( \rho - w^TX_t)_+ \, .
\end{equation}
Here, we have unbiased single-step subgradient estimate $(\rho - w^TX_t)_+$ only depending on the datapoint $X_t$.

Now, a non-linear kernel can be linearized using Random Fourier Features, allowing us to obtain an objective similar to \Cref{eq:objective-euclidean} for infinite-dimensional RKHS's.
We next review the details of Random Fourier Features (RFF).

\paragraph*{Review of Random Fourier Features.}
RFF approximates the feature map $\vphi(x)$ of a kernel $K(\cdot,\cdot)$ by means of a Euclidean projection $z:\mb{R}^D \to \mb{R}^d$ such that $z(x)^T z(y) \approx K(x,y)$.
We choose to use cos-sin pair RFF formulation of \citet{rahimi07}, which is defined by: 
\begin{align*}
	z(x) = \sqrt{\frac{2}{d}} \left( z_{\omega_1}(x), \ldots, z_{\omega_{2d}}(x) \right) = \sqrt{\frac{2}{d}} \left( \sin(\omega_1^Tx), \cos(\omega_1^Tx), \ldots, \sin(\omega_d^Tx), \cos(\omega_d^Tx) \right),
\end{align*}
where $\omega_1,\ldots,\omega_d$ are i.i.d. from the Fourier transform of $K(\cdot,\cdot)$ (which, recall is shift-invariant).
Then, the approximate kernel map is:
\begin{align*}
	z(x)^Tz(y) = \frac{1}{d}\sum_{j=1}^{d} \cos(\omega_j^T(x-y)) \approx \mb{E}_{\omega}[\cos(\omega^T(x-y))] = K(x,y) \, ,
\end{align*}
where the last equality holds by $K(x,y)=K(x-y)=\mb{E}_{\omega}[\exp(i\omega^T(x-y))]$ with expectation taken over the Fourier transform probability distribution of the kernel.

Now, the primal OCSVM objective \Cref{eq:objective-unconstrained} from before can be written as:
\begin{equation}\label{eq:obj-rff}
	\min_{w,\rho} \frac{\lambda}{2} \cdot w^Tw - \lambda \cdot \rho + \frac{1}{T} \sum_{t=1}^T \left( \rho - w^Tz(X_t) \right)_+ \, .
\end{equation}
This is a linear OCSVM problem for (projected) data $\{z(X_t)\}_{t=1}^T$ and the SGD update rules w.r.t. learning rate $\eta$ are:
\begin{align*}
	w_{t+1} &\leftarrow w_t - \eta \left( \lambda \cdot w_t -  z(X_t) \cdot \pmb{1}\{ \rho_t \geq w_t^T z(X_t) \} \right) \\
		\rho_{t+1} &\leftarrow \rho_t - \eta \left(  -\lambda + \pmb{1}\{ \rho_t \geq w_t^Tz(X_t) \} \right).
\end{align*}
However, the optimization problem \Cref{eq:obj-rff} is not strongly convex in the parameter $(w,\rho)$ (see \Cref{app:proof-ocsvm-not-sc}), making fast convergence guarantees unavailable.
Without such guarantees, it is unclear how to establish guarantees on Type I and II errors of the SGD iterates for \Cref{eq:obj-rff}.
To get around this, we define a new strongly convex variant of \Cref{eq:obj-rff}, which admits rigorous guarantees on both Type I and II errors.

\section{A New Strongly Convex Objective for Online Outlier Detection}

Since we choose to use RFF's to approximate our kernel, for simplicity we'll assume going forward that the data lie in the unit sphere $\mb{S}^{d-1}$ in $\mb{R}^d$ and satisfy the following margin assumption (which we show in \Cref{app:features} is justified for a Gaussian kernel for a sufficient number of RFF's $d$).

\begin{assumption}\label{ass:unit-sphere}
	Suppose the data domain $\mc{X}$ lies in the unit sphere $\mb{S}^{d-1} \subset \mb{R}^d$, and has margin from the origin of at least $1/2$, i.e. $r^* := \sup_{u \in \mb{S}^{d-1}} \inf_{x \in \mc{X}} \langle x, u\rangle \geq 1/2$.
\end{assumption}


We now present a strongly convex modification of \Cref{eq:obj-rff}.
For a probability measure $P_X$ on $\mc{X}$, we define objective:
\begin{equation}\label{eq:regularized-sc}
	F(w,\rho) := \frac{\|w\|_2^2 + \rho^2}{2} - \lambda \cdot \rho + \Exp\limits_{X\sim P_X}[ ( \rho - w^TX)_+ ].
\end{equation}

We next prove some basic properties of the optimizer to \Cref{eq:regularized-sc}.
The proofs of all presented results going forward are found in \Cref{app:proofs}.
First, we assert this objective is indeed strongly convex.

\begin{prop}\label{prop:obj-sc}
	$F(w,\rho)$ is $1$-strongly convex in $(w,\rho)$. 

\end{prop}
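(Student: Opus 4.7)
The plan is to decompose $F(w,\rho)$ into three parts and argue about each separately, leveraging the fact that strong convexity is preserved under addition of (merely) convex functions. Specifically, write $F = F_1 + F_2 + F_3$, where $F_1(w,\rho) := \frac{\|w\|_2^2 + \rho^2}{2}$, $F_2(w,\rho) := -\lambda \rho$, and $F_3(w,\rho) := \Exp_{X \sim P_X}[(\rho - w^T X)_+]$.

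First I would observe that $F_1$ is exactly $\tfrac{1}{2} \| (w,\rho) \|_2^2$ viewed as a quadratic in the concatenated variable $(w,\rho) \in \mb{R}^{d+1}$, so it is $1$-strongly convex with respect to the Euclidean norm on $\mb{R}^{d+1}$ (its Hessian is the identity). Next, $F_2$ is linear in $(w,\rho)$ and hence convex with modulus $0$. For $F_3$, I would note that for each fixed $X$, the map $(w,\rho) \mapsto \rho - w^T X$ is affine in $(w,\rho)$, hence $(w,\rho) \mapsto (\rho - w^T X)_+ = \max\{0, \rho - w^T X\}$ is a pointwise maximum of two affine functions and thus convex in $(w,\rho)$. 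Taking expectation preserves convexity (by linearity of expectation and Jensen's inequality), so $F_3$ is convex.

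Finally, I would invoke the standard fact that adding a convex function to a $\mu$-strongly convex function preserves $\mu$-strong convexity: if $g$ is $\mu$-strongly convex and $h$ is convex, then for any $(w_1,\rho_1), (w_2,\rho_2)$ and $\alpha \in [0,1]$,
\[
(g+h)(\alpha u_1 + (1-\alpha) u_2) \leq \alpha (g+h)(u_1) + (1-\alpha)(g+h)(u_2) - \tfrac{\mu}{2}\alpha(1-\alpha) \|u_1 - u_2\|_2^2,
\]
where $u_i := (w_i,\rho_i)$. Applying this with $g = F_1$ and $h = F_2 + F_3$ yields that $F$ is $1$-strongly convex.

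The proof has no real obstacle; the only subtlety worth being explicit about is that the hinge term $(\rho - w^T X)_+$ must be convex in the \emph{joint} variable $(w,\rho)$, not just separately in each, which is immediate from the max-of-affine characterization above. Everything else is a direct bookkeeping argument about preservation of (strong) convexity under summation and expectation.
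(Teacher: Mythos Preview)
Your proposal is correct and follows essentially the same approach as the paper: decompose $F$ into the $1$-strongly convex quadratic $\tfrac{1}{2}\|(w,\rho)\|_2^2$ plus the remaining terms, show the latter are convex, and conclude by additivity. The only cosmetic difference is that you justify convexity of $(\rho - w^TX)_+$ via the max-of-affine characterization, whereas the paper phrases it as composition of a non-decreasing (convex) function with an affine map; both are valid and equivalent.
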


Next, we show the optimizer of \Cref{eq:regularized-sc}
has small Type I error, controlled by $\lambda$ similar to the OCSVM objective \Cref{eq:objective-slack} \citep[Prop. 4]{scholkopf01}.

\begin{thm}[Small Type I Error]\label{prop:small-type-1}
	Let $\lambda \in [0,1]$.
	The solution $(w_{\lambda}, \rho_{\lambda})$ to \Cref{eq:regularized-sc} satisfies
	$
		\mb{P}_{X\sim P_X}( \langle w_{\lambda}, X\rangle < \rho_{\lambda}) < \lambda \, .
	$
\end{thm}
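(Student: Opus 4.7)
The plan is to exploit the first-order optimality condition with respect to $\rho$ to connect the Type I error rate to $\rho_\lambda$ and $\lambda$, and then use the margin \Cref{ass:unit-sphere} to force $\rho_\lambda > 0$ and thereby obtain a strict inequality.

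First, I would compute the $\rho$-subdifferential of $F$. The term $(\rho - w^T X)_+$ has subdifferential in $\rho$ equal to $\{1\}$ when $\rho > w^T X$, $\{0\}$ when $\rho < w^T X$, and the interval $[0,1]$ at the kink. Hence optimality at $(w_\lambda,\rho_\lambda)$ yields a measurable selection $g: \mc{X} \to [0,1]$ with $g(x)=1$ on $\{w_\lambda^T x < \rho_\lambda\}$ and $g(x)=0$ on $\{w_\lambda^T x > \rho_\lambda\}$, satisfying
\[
0 \;=\; \rho_\lambda - \lambda + \Exp_{X\sim P_X}[g(X)].
\]
This gives the key identity $\mb{E}[g(X)] = \lambda - \rho_\lambda$, and since $g \geq \pmb{1}\{w_\lambda^T X < \rho_\lambda\}$ pointwise, we obtain $\mb{P}(w_\lambda^T X < \rho_\lambda) \leq \lambda - \rho_\lambda$.

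Second, I would show $\rho_\lambda > 0$, which upgrades the bound to the strict inequality $\mb{P}(w_\lambda^T X < \rho_\lambda) < \lambda$. To do this, I would use \Cref{ass:unit-sphere} to pick a unit vector $u$ with $\inf_{x \in \mc{X}}\langle x, u\rangle \geq 1/2$, and test the point $(w,\rho) = (\epsilon u, \epsilon/2)$: on this choice the hinge term vanishes identically, so
\[
F(\epsilon u, \epsilon/2) \;=\; \tfrac{\epsilon^2}{2} + \tfrac{\epsilon^2}{8} - \tfrac{\lambda \epsilon}{2},
\]
which is strictly negative for $\epsilon$ sufficiently small (any $\lambda > 0$). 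On the other hand, on the half-space $\{\rho \leq 0\}$ every summand of $F$ is nonnegative: $\tfrac{1}{2}(\|w\|^2 + \rho^2) \geq 0$, $-\lambda\rho \geq 0$ since $\lambda \geq 0$, and $\mb{E}[(\rho - w^T X)_+] \geq 0$, so $F \geq 0$ there. Since the optimizer attains a value $\leq F(\epsilon u, \epsilon/2) < 0$, it must lie in $\{\rho > 0\}$.

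The only mildly delicate point is the subgradient identity, since the hinge is non-differentiable; once a measurable selection $g$ is chosen properly (which is standard when $P_X$ is general, via interchange of expectation and subdifferential for convex integrands), the rest is mechanical. The case $\lambda = 0$ is trivial since both sides vanish, so we need only handle $\lambda \in (0,1]$ in the margin step above, and there is no issue.
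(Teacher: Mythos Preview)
Your proof is correct and in fact more direct than the paper's. The paper reparametrizes $(w,\rho)\mapsto (v,r,z)$ with $v=w/\|w\|$, $r=\rho/\|w\|$, $z=\|w\|$, explicitly minimizes the resulting quadratic in $z$, and then applies first-order optimality in $r$ to the reduced objective $F(r,v)=-\frac{(\lambda r - L(r,v))^2}{2(1+r^2)}$; the subdifferential computation there yields the same interval $[\mb{P}(v^TX<r),\,\mb{P}(v^TX\le r)]$ and the strict inequality comes from $\lambda r_\lambda > L(r_\lambda,v_\lambda)$. You instead take the $\rho$-subdifferential of $F$ directly, obtaining the quantitative bound $\mb{P}(\langle w_\lambda,X\rangle<\rho_\lambda)\le \lambda-\rho_\lambda$, and then force strictness by showing $\rho_\lambda>0$ via a clean comparison ($F\ge 0$ on $\{\rho\le 0\}$ versus a test point with negative value). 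Your route avoids the change of variables and the separate case analysis for $w=0$, and the bound $\lambda-\rho_\lambda$ is arguably more informative than what the paper extracts. The paper's reparametrization, on the other hand, is reused downstream to prove the large-margin property (\Cref{prop:large-margin-property}) and the norm lower bound $\|w_\lambda\|\ge \lambda r^*/2$, so it earns its keep elsewhere.

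One small caveat: your remark that the case $\lambda=0$ is ``trivial since both sides vanish'' does not actually establish a \emph{strict} inequality; at $\lambda=0$ the minimizer is $(0,0)$ and the claim would read $0<0$. This edge case is a defect of the theorem statement itself (the paper's proof has the same gap, since its negative-value witness also requires $\lambda>0$), so your argument for $\lambda\in(0,1]$ is the substantive content.
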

On the other hand, our new objective \Cref{eq:regularized-sc} gives rise to a solution whose margin or distance of the separating hyperplane to the origin $r_{\lambda} := \frac{\rho_{\lambda}}{\|w_{\lambda}\|}$ is maximal, and in fact larger than that of OCSVM.
This implies that, in terms of margin, the solution to \Cref{eq:regularized-sc} generally has a larger rejection region for outliers.

\begin{theorem}[Large Margin Property]\label{prop:large-margin-property}
The solution $(w_{\lambda},\rho_{\lambda})$ to \Cref{eq:regularized-sc} satisfies $r_{\lambda} \geq r^*$. 
\end{theorem}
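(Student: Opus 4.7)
The plan is to argue by contradiction: assume $r_\lambda = \rho_\lambda/\|w_\lambda\| < r^*$ and derive a contradiction with the uniqueness of the minimizer of the strongly convex $F$ (\Cref{prop:obj-sc}). The key ingredient is a direction $u^* \in \mb{S}^{d-1}$ attaining $r^* = \sup_{u}\inf_{x\in\mc{X}} \langle u, x\rangle$; such a $u^*$ exists by compactness of the sphere and concavity of $u \mapsto \inf_{x}\langle u,x\rangle$, and satisfies $\langle u^*, X\rangle \geq r^*$ almost surely. Before invoking the main argument, I would verify $\|w_\lambda\|>0$ (so that $r_\lambda$ is well-defined) by comparing $F(\alpha u^*, \alpha r^*) = \alpha^2(1+r^{*2})/2 - \lambda\alpha r^*$, which is strictly negative for small $\alpha>0$, against $F(0,\rho)\geq 0$ for all $\rho$, forcing any minimizer to have a nonzero $w$-component.

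The first main step is to build a competitor $(w',\rho_\lambda) := (\|w_\lambda\|u^*, \rho_\lambda)$ with the same norm and threshold as $(w_\lambda,\rho_\lambda)$ but with the direction swapped to $u^*$. The contradiction hypothesis $\rho_\lambda < r^*\|w_\lambda\|$ yields $w'^T X = \|w_\lambda\|\langle u^*, X\rangle \geq \|w_\lambda\| r^* > \rho_\lambda$ almost surely, so the hinge term $\mb{E}[(\rho_\lambda - w'^T X)_+]$ vanishes. Since the remaining part of $F$ depends only on $\|w\|$ and $\rho$, I obtain $F(w',\rho_\lambda)\leq F(w_\lambda,\rho_\lambda)$, where the gap equals the (nonnegative) hinge at $(w_\lambda,\rho_\lambda)$. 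If this inequality is strict, optimality of $(w_\lambda,\rho_\lambda)$ is violated immediately; otherwise, uniqueness from strong convexity forces $w' = w_\lambda$, so the optimum already aligns with $u^*$ and its hinge is also zero.

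To dispose of this residual case, I would apply a second, norm-shrinking perturbation: for sufficiently small $\varepsilon>0$, set $(w_\varepsilon,\rho_\lambda) := ((\|w_\lambda\|-\varepsilon)u^*, \rho_\lambda)$. Since $\rho_\lambda < r^*\|w_\lambda\|$ strictly, the inequality persists for $\|w_\lambda\|-\varepsilon$ once $\varepsilon$ is small enough, so the hinge at $(w_\varepsilon,\rho_\lambda)$ remains zero. Hence $F(w_\varepsilon,\rho_\lambda) = ((\|w_\lambda\|-\varepsilon)^2 + \rho_\lambda^2)/2 - \lambda\rho_\lambda$ is strictly less than $F(w_\lambda,\rho_\lambda)$, contradicting optimality of $(w_\lambda,\rho_\lambda)$.

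The main obstacle will be handling exactly this boundary case in the second step, where the direction-swapping competitor coincides with the minimizer itself. Strong convexity alone gives uniqueness but does not directly rule out $w_\lambda$ pointing in the direction $u^*$; the shrinking-norm perturbation resolves this by exploiting the fact that the hinge stays identically zero in a neighborhood of $(\|w_\lambda\|u^*,\rho_\lambda)$, so the quadratic term strictly prefers a smaller $\|w\|$. Minor care is also needed to verify that the supremum defining $r^*$ is attained (handled by compactness and concavity) and to handle the edge case $\lambda = 0$ separately if needed.
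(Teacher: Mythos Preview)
Your argument is correct and takes a genuinely different route from the paper's. The paper first reparametrizes $(w,\rho)\mapsto(r,z,v)$ with $r=\rho/\|w\|$, $z=\|w\|$, $v=w/\|w\|$, minimizes out $z$ analytically to obtain the reduced objective $F(r,v)=-\frac{(\lambda r - L(r,v))^2}{2(1+r^2)}$ (where $L(r,v)=\mb{E}[(r-v^TX)_+]$), and then compares values at $(r_\lambda,v_\lambda)$ and $(r^*,v^*)$ (where $L(r^*,v^*)=0$); after ruling out one sign case using $L(r_\lambda,v_\lambda)<\lambda r_\lambda$, the remaining case gives $G(r_\lambda)\geq G(r^*)$ for $G(r)=r/\sqrt{1+r^2}$, which is strictly increasing. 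Your proof instead stays in the original variables and builds two explicit competitors---first swapping $w_\lambda$'s direction to $u^*$, then shrinking its norm---using strong convexity only for uniqueness in the boundary case. Your approach is more elementary and self-contained, and avoids the reparametrization machinery entirely. The paper's approach, on the other hand, yields the reduced objective $F(r,v)$ and the inequality $L(r_\lambda,v_\lambda)\leq\lambda(r_\lambda-r^*)$ as byproducts, which are reused downstream (e.g., in lower-bounding $\|w_\lambda\|$ for \Cref{thm:margin-LB}); so while your proof is cleaner for this statement in isolation, the paper's detour pays off elsewhere.
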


Thus, put together, our new objective's solution has the same Type I error bound as OCSVM, but a tighter acceptance region for inliers allowing for smaller Type II error against alternatives near the decision boundary.

\subsection{Solving Strongly Convex Objective with SGD}

\IncMargin{1em}
\begin{algorithm2e} 
	\caption{{\bld{S}GD-based \bld{O}ne-class \bld{N}ovelty detection with \bld{A}pproximate \bld{R}BFs (SONAR)}} \label{alg:sgd}
 \bld{Input}: step sizes $t\mapsto \eta_t$, initializer $(w_0,\rho_0)$, anticipated outlier proportion $\lambda \in [0,1]$.\\
 \For{$t=1,2,\ldots,T$}{
	 Observe $Z_t := \pmb{1}\{ \langle w_{t-1}, X_t \rangle \leq \rho_{t-1} \}$.\\
	 Update:\\
	 \Indp
		 $w_t \leftarrow w_{t-1} - \eta_{t-1} \cdot (w_{t-1} - X_t \cdot Z_t )$.  \label{line:w}\\
		 $\rho_t \leftarrow  \rho_{t-1} - \eta_{t-1} \cdot ( \rho_{t-1} - \lambda + Z_t )$. \label{line:rho}\\
	 \Indm
	}
\end{algorithm2e}
\DecMargin{1em}

We next show analogues of \Cref{prop:small-type-1,prop:large-margin-property} for the SGD iterates to solving \Cref{eq:regularized-sc} (see \Cref{alg:sgd}; named \sonar) given i.i.d. data from $P_X$.

We start by establishing a high-probability last-iterate convergence bound on the distance to the optimizer $(w_{\lambda}, \rho_{\lambda})$, which follows from more general convergence results \citep{harvey19a}.

\begin{lemma}[SGD Convergence Guarantee]\label{lem:high-prob-distance-SC}
	Suppose the data $X_1,\ldots,X_T$ are i.i.d. from some distribution $P_X$ and the initializer $(w_0,\rho_0)$ satisfies $\norm{w_0}_2 \leq 1$ and $\abs{\rho_0} \leq 1$.
	Let $(w_T,\rho_T)$ be the last iterate over $T$ steps of \Cref{alg:sgd} using step sizes $\eta_t := 1/t$.
	Then, we have with probability at least $1-\delta$:
	\begin{align*}
		\| (w_T, \rho_T) - (w_{\lambda} , \rho_{\lambda} ) \|^2 \leq O \left( \frac{\log(T) \log(1/\delta) }{T} \right).
	\end{align*}
\end{lemma}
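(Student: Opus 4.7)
The plan is to recognize the iterates of \Cref{alg:sgd} as standard stochastic subgradient descent on the $1$-strongly convex objective $F(w,\rho)$ from \Cref{eq:regularized-sc}, and then invoke the off-the-shelf high-probability last-iterate guarantee of \citet{harvey19a}.

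First, I would verify that the algorithm's increments are unbiased stochastic subgradients of $F$. A direct computation using dominated convergence gives $\nabla_w F(w,\rho) = w - \mathbb{E}_{X\sim P_X}[X\cdot \mathbf{1}\{\rho \geq w^T X\}]$ and $\rho - \lambda + \mathbb{E}_{X \sim P_X}[\mathbf{1}\{\rho \geq w^T X\}] \in \partial_\rho F(w,\rho)$. Hence the pair $g_t := (w_{t-1} - X_t Z_t,\ \rho_{t-1} - \lambda + Z_t)$ applied in lines \ref{line:w}--\ref{line:rho} satisfies $\mathbb{E}[g_t \mid \mathcal{F}_{t-1}] \in \partial F(w_{t-1},\rho_{t-1})$, since $X_t$ is drawn i.i.d.\ from $P_X$ independently of the past.

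Second, I would show by induction that the iterates stay in a compact region: $\|w_t\|_2 \leq 1$ and $|\rho_t| \leq 1$ for all $t$. The key observation is that each update rewrites as a convex combination,
\[
w_t \;=\; (1-\eta_{t-1})\,w_{t-1} + \eta_{t-1}(X_t Z_t), \qquad \rho_t \;=\; (1-\eta_{t-1})\,\rho_{t-1} + \eta_{t-1}(\lambda - Z_t),
\]
and one checks $\|X_t Z_t\|_2 \leq \|X_t\|_2 \leq 1$ by \Cref{ass:unit-sphere} and $|\lambda - Z_t| \leq 1$ since $\lambda \in [0,1]$ and $Z_t \in \{0,1\}$. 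With $\eta_{t-1} \in [0,1]$ the induction step preserves the base hypothesis $\|w_0\|_2 \leq 1,\ |\rho_0|\leq 1$. This boundedness in turn yields a deterministic gradient bound $\|g_t\|_2 \leq G$ for an absolute constant $G = O(1)$.

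Third, with strong convexity parameter $\mu = 1$ (by \Cref{prop:obj-sc}), bounded stochastic subgradients, and step sizes $\eta_t = 1/t$, I would apply the high-probability last-iterate convergence theorem of \citet{harvey19a} to obtain, with probability at least $1-\delta$,
\[
F(w_T, \rho_T) - F(w_\lambda, \rho_\lambda) \;\leq\; O\!\left(\frac{G^2 \log T \, \log(1/\delta)}{T}\right).
\]
Converting this function-value bound to a distance bound via $1$-strong convexity, namely $\|(w_T,\rho_T) - (w_\lambda,\rho_\lambda)\|^2 \leq 2\big(F(w_T,\rho_T) - F(w_\lambda,\rho_\lambda)\big)$, yields the claim.

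The main obstacle here is not conceptual but bookkeeping: verifying that the preconditions of \citet{harvey19a} (martingale-difference structure of $g_t - \mathbb{E}[g_t \mid \mathcal{F}_{t-1}]$, almost-sure boundedness, domain containing the optimum, and the prescribed step-size schedule matching $\eta_t = 1/(\mu t)$) are all met in our setting, and handling the indexing convention at the first iteration. Once these are in place, the stated bound follows by direct substitution of $\mu = 1$ and $G = O(1)$.
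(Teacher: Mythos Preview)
Your proposal is correct and follows essentially the same route as the paper: verify the updates are unbiased stochastic subgradients, show the iterates remain in $\{(w,\rho):\|w\|_2\le 1,\ |\rho|\le 1\}$ via the same convex-combination rewriting, deduce an $O(1)$ subgradient norm bound, invoke the last-iterate high-probability result of \citet{harvey19a}, and convert the function-value gap to a squared-distance bound via $1$-strong convexity. The paper carries out exactly these steps (even computing the explicit Lipschitz constant $\sqrt{12}$), so there is no substantive difference.
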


As a consequence of this SGD convergence bound, we obtain analogues of \Cref{prop:small-type-1} and \Cref{prop:large-margin-property} for the iterates of SGD over $T$ steps.
We first present a generic Type I error bound in terms of the number of steps $T$ and $\lambda$.

\begin{thm}[Generalization Error Bound of Final Iterate]\label{thm:type-1-sgd}
	Under the same conditions as \Cref{lem:high-prob-distance-SC}, we have
	with probability at least $1-\delta$:
	\begin{align*}
		\mb{P}_X( \langle w_T, X \rangle < \rho_T) \leq \mb{P}_X\left( \langle w_{\lambda}, X\rangle < \rho_{\lambda} + O \left( \frac{\log^{1/2}(T) \cdot \log^{1/2}(1/\delta) }{\sqrt{T}} \right) \right).
	\end{align*}
\end{thm}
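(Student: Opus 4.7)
The plan is to reduce the bound on the Type I error of the final iterate to the Type I error of the population optimizer via an event containment argument, using the SGD convergence bound from \Cref{lem:high-prob-distance-SC} and the fact that the data lies in the unit sphere.

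First, I would condition on the high-probability event from \Cref{lem:high-prob-distance-SC}, under which
\[
\|w_T - w_\lambda\|_2^2 + (\rho_T - \rho_\lambda)^2 \leq O\!\left(\frac{\log(T)\log(1/\delta)}{T}\right) =: \epsilon_T^2.
\]
In particular $\|w_T - w_\lambda\|_2 \leq \epsilon_T$ and $|\rho_T - \rho_\lambda| \leq \epsilon_T$.

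Next, for any $X \in \mc{X} \subset \mb{S}^{d-1}$, I would decompose the classification event via
\[
\langle w_T, X\rangle - \rho_T = \langle w_\lambda, X\rangle - \rho_\lambda + \langle w_T - w_\lambda, X\rangle - (\rho_T - \rho_\lambda),
\]
and apply Cauchy--Schwarz together with $\|X\|_2 = 1$ to bound
\[
|\langle w_T - w_\lambda, X\rangle - (\rho_T - \rho_\lambda)| \leq \|w_T - w_\lambda\|_2 + |\rho_T - \rho_\lambda| \leq 2\epsilon_T.
\]
Hence the event $\{\langle w_T, X\rangle < \rho_T\}$ is deterministically contained in $\{\langle w_\lambda, X\rangle < \rho_\lambda + 2\epsilon_T\}$.

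Finally, taking probabilities over a fresh $X \sim P_X$ (independent of the training data, and noting that $(w_T,\rho_T)$ is fixed given the conditioning event) yields
\[
\mb{P}_X(\langle w_T, X\rangle < \rho_T) \leq \mb{P}_X\!\left(\langle w_\lambda, X\rangle < \rho_\lambda + 2\epsilon_T\right),
\]
which gives the stated bound after substituting $\epsilon_T = O(\sqrt{\log(T)\log(1/\delta)/T})$. Since every step is a direct consequence of \Cref{lem:high-prob-distance-SC}, Cauchy--Schwarz, and \Cref{ass:unit-sphere}, I do not anticipate a substantive obstacle; the only delicate point is making sure the event containment is argued deterministically after conditioning on the SGD convergence event so that one can take expectation only over the test point $X$ without incurring additional union-bound factors.
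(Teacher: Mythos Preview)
Your proposal is correct and matches the paper's proof essentially line for line: the paper also shows the deterministic implication $\langle w_T, X\rangle < \rho_T \implies \langle w_\lambda, X\rangle - \|w_T - w_\lambda\|_2 < \rho_\lambda + |\rho_T - \rho_\lambda|$ via Cauchy--Schwarz with $\|X\|_2=1$, and then plugs in the bound from \Cref{lem:high-prob-distance-SC}. Your write-up is in fact slightly more careful about the conditioning and the role of the fresh test point $X$, but the argument is the same.
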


Next, we show that the Type I error is bounded by $\lambda$ when using a slightly smaller threshold $\rho_T \cdot (1-\eps)$ than prescribed by the iterate $(\rho_T,w_T)$.
This is akin to the classical generalization bound for the OCSVM solution \citep[Theorem 7]{scholkopf01}.

\begin{cor}[``Convergence'' of Generalization Error for Final Iterate]\label{cor:type-1-sgd}
	Under the same notation and conditions as \Cref{thm:type-1-sgd}, if $\frac{T}{\log(T)} \geq \Omega( (\eps \cdot \lambda)^{-2} \cdot \log(1/\delta) )$, then for any $\eps\in (0,1]$, with probability at least $1-\delta$:
	\[
		\mb{P}_X( \langle w_T, X \rangle < \rho_T \cdot (1-\eps)) < \lambda.
	\]
\end{cor}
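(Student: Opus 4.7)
The plan is to combine the last-iterate convergence bound of \Cref{lem:high-prob-distance-SC} with the population-level Type~I bound in \Cref{prop:small-type-1}, essentially rerunning the proof of \Cref{thm:type-1-sgd}, but exploiting the extra slack introduced by the multiplicative shrink $(1-\eps)$ in the threshold $\rho_T$. The intuition is that on the high-probability event that $(w_T,\rho_T)$ is $O(\Delta)$-close to $(w_\lambda,\rho_\lambda)$ with $\Delta := \sqrt{\log(T)\log(1/\delta)/T}$, the region $\{\langle w_T,X\rangle < \rho_T(1-\eps)\}$ is contained, up to a radius-$O(\Delta)$ perturbation, in $\{\langle w_\lambda,X\rangle < \rho_\lambda - \eps\rho_\lambda\} \subseteq \{\langle w_\lambda,X\rangle < \rho_\lambda\}$, whose probability is already $<\lambda$ by \Cref{prop:small-type-1}. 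So the slack $\eps\rho_\lambda$ must dominate the SGD error $O(\Delta)$, and this is what dictates the required sample size.

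Concretely, on the event of \Cref{lem:high-prob-distance-SC}, which gives $\|w_T-w_\lambda\|_2+|\rho_T-\rho_\lambda|\leq O(\Delta)$, the margin assumption $\|X\|_2=1$ (\Cref{ass:unit-sphere}) and Cauchy--Schwarz imply that $\langle w_T,X\rangle<\rho_T(1-\eps)$ forces
\[
\langle w_\lambda,X\rangle < \rho_T(1-\eps) + \|w_T-w_\lambda\|_2 \leq \rho_\lambda - \eps\rho_\lambda + (1+\eps)\cdot O(\Delta),
\]
using $\rho_T\leq \rho_\lambda+|\rho_T-\rho_\lambda|$ in the non-shrunken part and $\rho_T\geq \rho_\lambda-|\rho_T-\rho_\lambda|$ inside the $-\eps\rho_T$ term. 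Hence whenever $O(\Delta)\leq \eps\rho_\lambda/2$, the right-hand side is at most $\rho_\lambda$, and \Cref{prop:small-type-1} delivers
\[
\mb{P}_X\!\left(\langle w_T,X\rangle<\rho_T(1-\eps)\right) \leq \mb{P}_X\!\left(\langle w_\lambda,X\rangle<\rho_\lambda\right) < \lambda.
\]

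The main obstacle, and essentially the only nontrivial step, is to convert the sufficient condition $\Delta\leq O(\eps\rho_\lambda)$ (equivalently $T/\log T \geq \Omega(\log(1/\delta)/(\eps\rho_\lambda)^2)$) into the advertised $T/\log T\geq \Omega((\eps\lambda)^{-2}\log(1/\delta))$. This requires a matching lower bound $\rho_\lambda = \Omega(\lambda)$, which is \emph{not} immediate from \Cref{prop:small-type-1} alone (that only gives $\rho_\lambda > 0$). The natural route is through the first-order conditions for \Cref{eq:regularized-sc}, which yield $\rho_\lambda = \lambda - \mb{P}(\langle w_\lambda,X\rangle\leq \rho_\lambda)$ and $w_\lambda=\mb{E}[X\,\pmb{1}\{\langle w_\lambda,X\rangle\leq \rho_\lambda\}]$, combined with the large-margin guarantee $\rho_\lambda \geq r^*\|w_\lambda\|$ from \Cref{prop:large-margin-property} and the margin assumption $r^*\geq 1/2$ from \Cref{ass:unit-sphere}. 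Taking the inner product of the $w$-KKT identity with $w_\lambda$ gives $\|w_\lambda\|_2^2 \leq \rho_\lambda(\lambda-\rho_\lambda)$, and coupling this with $\|w_\lambda\|_2\leq 2\rho_\lambda$ should pin $\rho_\lambda$ to be $\Theta(\lambda)$, closing the argument. Once this quantitative control on $\rho_\lambda$ is in hand, the rest is the perturbation argument above, directly analogous to the proof of \Cref{thm:type-1-sgd}.
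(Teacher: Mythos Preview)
Your overall strategy matches the paper's: combine \Cref{lem:high-prob-distance-SC} with \Cref{prop:small-type-1} via a perturbation argument, and reduce everything to the quantitative claim $\rho_\lambda = \Omega(\lambda)$ so that the SGD error $O(\Delta)$ is absorbed by the $\eps\rho_\lambda$ slack. The perturbation chain you wrote is correct and essentially the paper's argument (the paper works with $\eps\rho_T$ rather than $\eps\rho_\lambda$, but that is a cosmetic difference since $\rho_T\geq\rho_\lambda-\Delta^{1/2}$).

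The gap is in your sketch of $\rho_\lambda=\Omega(\lambda)$. Both inequalities you derive --- $\|w_\lambda\|_2^2 \leq \rho_\lambda(\lambda-\rho_\lambda)$ from inner-producting the $w$-KKT identity with $w_\lambda$, and $\|w_\lambda\|_2 \leq 2\rho_\lambda$ from the large-margin property --- are \emph{upper} bounds on $\|w_\lambda\|_2$. They are simultaneously satisfied by $\rho_\lambda$ and $\|w_\lambda\|_2$ arbitrarily close to zero, so they cannot ``pin'' $\rho_\lambda$ away from zero. The paper instead invokes the auxiliary lemma \Cref{prop:pop-norm-LB}, which gives the \emph{lower} bound $\|w_\lambda\|_2 \geq \lambda r^*/2$ via the reparametrization machinery of \Cref{app:proof-large-margin}; then $\rho_\lambda = r_\lambda\|w_\lambda\|_2 \geq (r^*)^2\lambda/2 \geq \lambda/8$. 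Your KKT route can be salvaged with a different inner product: pairing the $w$-KKT identity $w_\lambda=\mb{E}[X\,g_X]$ with the population OCSVM direction $v^*$ (for which $\langle v^*,X\rangle\geq r^*$ on the support) yields the lower bound $\|w_\lambda\|_2 \geq \langle v^*,w_\lambda\rangle \geq r^*\,\mb{E}[g_X]=r^*(\lambda-\rho_\lambda)$, and combining with $\|w_\lambda\|_2\leq\rho_\lambda/r^*$ gives $\rho_\lambda \geq \frac{(r^*)^2}{1+(r^*)^2}\lambda \geq \lambda/5$.
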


Next, we show an analogue of \Cref{prop:large-margin-property} for the SGD iterates.

\begin{thm}[Large Margin Property for Final Iterate]\label{thm:margin-LB}
	Under the same notation and conditions as \Cref{thm:type-1-sgd}, we have
	with probability at least $1-\delta$:
	\[
		r_T := \frac{\rho_T}{\|w_T\|} \geq r_{\lambda} - O\left( \frac{\sqrt{ \log(T)\log(1/\delta) } }{\lambda \cdot \sqrt{T}} \right).
	\]
	Furthermore, if $\frac{T}{\log(T)} \geq \Omega( (\eps \cdot \lambda)^{-2} \cdot \log(1/\delta))$, then $r_T \geq r_{\lambda} \cdot (1 - \eps)$.
\end{thm}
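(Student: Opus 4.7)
The plan is to combine the last-iterate distance bound of \Cref{lem:high-prob-distance-SC} with a quantitative lower bound $\|w_\lambda\| = \Omega(\lambda)$ on the population optimizer's norm. By \Cref{lem:high-prob-distance-SC}, with probability at least $1-\delta$ both $|\rho_T-\rho_\lambda|\leq \Delta$ and $\|w_T-w_\lambda\|\leq \Delta$ for $\Delta := O\bigl(\sqrt{\log(T)\log(1/\delta)/T}\bigr)$. Since $\|w_T\|\leq \|w_\lambda\|+\Delta$ and $\rho_T\geq \rho_\lambda-\Delta$, a short algebraic manipulation yields
\begin{align*}
r_T \;\geq\; \frac{\rho_\lambda-\Delta}{\|w_\lambda\|+\Delta} \;\geq\; r_\lambda - \frac{(1+r_\lambda)\Delta}{\|w_\lambda\|},
\end{align*}
using $(1-x)/(1+y)\geq 1-x-y$ for $x,y\geq 0$ together with $r_\lambda\|w_\lambda\|=\rho_\lambda$. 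It thus suffices to (i) upper bound $r_\lambda$ by a constant and (ii) lower bound $\|w_\lambda\|$ by $\Omega(\lambda)$.

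For (i), I argue $r_\lambda\leq 1$ directly: data lies on the unit sphere, so $\langle w_\lambda, X\rangle \leq \|w_\lambda\|$ a.s., and if $\rho_\lambda>\|w_\lambda\|$ the Type I rate would be $1$, contradicting \Cref{prop:small-type-1}. For (ii), the key step is to derive the closed form $F(w_\lambda,\rho_\lambda) = -\tfrac{1}{2}(\|w_\lambda\|^2+\rho_\lambda^2)$ from the first-order conditions for \Cref{eq:regularized-sc}: stationarity yields a measurable selection $q(X)\in[0,1]$ from the hinge subdifferential with $w_\lambda = \mathbb{E}[X q(X)]$ and $\mathbb{E}[q(X)]=\lambda-\rho_\lambda$, from which $\mathbb{E}[(\rho_\lambda-\langle w_\lambda, X\rangle)_+]= \rho_\lambda(\lambda-\rho_\lambda)-\|w_\lambda\|^2$, and plugging back collapses $F$ to the claimed form. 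I then compare with the feasible point $(\gamma u, \gamma/2)$, where $u\in\mathbb{S}^{d-1}$ is a direction witnessing $r^*\geq 1/2$ in \Cref{ass:unit-sphere}: since $\langle u,X\rangle\geq 1/2$ a.s.\ kills the hinge term, $F(\gamma u,\gamma/2) = 5\gamma^2/8 - \lambda\gamma/2$, which at $\gamma = 2\lambda/5$ equals $-\lambda^2/10$. Optimality forces $\|w_\lambda\|^2 + \rho_\lambda^2 \geq \lambda^2/5$, and combined with $\rho_\lambda\leq \|w_\lambda\|$ from (i) this yields $\|w_\lambda\|\geq \lambda/\sqrt{10}$.

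Plugging (i) and (ii) into the opening display gives the first bound $r_T\geq r_\lambda - O(\Delta/\lambda)$. For the ``furthermore'' claim, I invoke $r_\lambda\geq r^*\geq 1/2$ from \Cref{prop:large-margin-property} and choose $T/\log(T)\geq \Omega(\log(1/\delta)/(\eps\lambda)^2)$ so that the error $O(\Delta/\lambda)$ is at most $\eps r_\lambda$, yielding $r_T\geq r_\lambda(1-\eps)$. The main obstacle I anticipate is the KKT step in (ii): one must handle the set-valued subgradient of the hinge loss on the boundary $\{\langle w_\lambda, X\rangle = \rho_\lambda\}$ and argue that a consistent measurable selection $q(\cdot)$ verifies both stationarity equations simultaneously, so that the cancellations producing $F(w_\lambda,\rho_\lambda) = -\tfrac{1}{2}(\|w_\lambda\|^2+\rho_\lambda^2)$ are valid; once this closed form is in hand, the remainder is routine comparison against a well-chosen feasible point.
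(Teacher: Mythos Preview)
Your proof is correct and follows the same high-level structure as the paper: bound $\|(w_T,\rho_T)-(w_\lambda,\rho_\lambda)\|$ via \Cref{lem:high-prob-distance-SC}, then convert this to a margin bound by (i) bounding $r_\lambda\leq 1$ and (ii) lower-bounding $\|w_\lambda\|$ by $\Omega(\lambda)$. The algebraic step and the ``furthermore'' part via $r_\lambda\geq r^*\geq 1/2$ are essentially identical to the paper's.

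Where you differ is in how you obtain the two auxiliary facts. For (i), you invoke \Cref{prop:small-type-1} together with Cauchy--Schwarz ($\rho_\lambda>\|w_\lambda\|$ would force Type~I error $=1$); the paper instead proves a separate lemma (\Cref{lem:margin-less-than-1}) showing that $(w,\|w\|_2)$ strictly improves the objective over $(w,\rho)$ whenever $\rho>\|w\|_2$. Your route is shorter and reuses an already-stated result. For (ii), you derive the closed form $F(w_\lambda,\rho_\lambda)=-\tfrac12(\|w_\lambda\|^2+\rho_\lambda^2)$ from first-order optimality (using the identity $(a)_+=a\cdot q$ for any valid subgradient $q$ of $(\cdot)_+$ at $a$), then compare against the explicit feasible point $(\gamma u,\gamma/2)$; the paper instead reuses the reparametrization $(r,z,v)$ from the proof of \Cref{prop:large-margin-property} to get $\|w_\lambda\|\geq \lambda r^*/2$ (\Cref{prop:pop-norm-LB}). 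Your KKT argument is self-contained and does not depend on the earlier reparametrization machinery; the paper's route gives the slightly sharper constant $\lambda r^*/2$ tied to the margin $r^*$ rather than your $\lambda/\sqrt{10}$, but this makes no difference for the $O(\cdot)$ statement. The measurable-selection concern you flag is standard: since the integrand is bounded and jointly convex, $\partial_{(w,\rho)}\mathbb{E}[\cdot]=\mathbb{E}[\partial_{(w,\rho)}(\cdot)]$, and the joint subdifferential of the hinge in $(w,\rho)$ is $\{(-Xq,q):q\in\partial(\cdot)_+\}$, so a single $q(\cdot)$ witnesses both stationarity equations.
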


\paragraph*{Comparing with Known Theoretical Guarantees for OCSVM.}
Even in the batch setting, the only previously known finite-sample high-probability generalization error bound to our knowledge is Theorem 7 of \citet{scholkopf01}, for the soft OCSVM problem in \Cref{eq:objective-slack}.
The bound therein scales like $\tilde{O} \left( \frac{\lambda}{r^*} + \frac{1}{(r^*)^2 \cdot T} \right)$, which is larger than our bound in \Cref{cor:type-1-sgd}.

In terms of learned margin, the population OCSVM objective only guarantees a learned margin of $r^*$ which we show in \Cref{thm:margin-LB} is smaller than \sonar's margin (approximating $r_{\lambda} \geq r^*$).

As it turns out, the OCSVM objective with slack \Cref{eq:objective-slack} also admits a solution with generalization error at most $\lambda$ and margin at least $r^*$ (see \Cref{app:ocsvm-lambda,app:ocsvm-large-margin}) which to our knowledge does not seem to be recorded in the literature.
In spite of this, as the corresponding objective is convex but not strongly-convex (see \Cref{app:proof-ocsvm-not-sc} for proof), SGD convergence rates will be slower of order $\tO(T^{-1/2})$ and, thus, error bounds looser than those of \sonar above.

\subsection{Comparing Complexity and Efficiency of SONAR vs Reduction to Batch OCSVM}

We show the training time, evaluation time (to classify a point), and memory of \sonar is superior to solving batch kernel OCSVM without RFF's at each step.
All our comparisons are done for the Gaussian kernel.

\paragraph*{Training Time.}
%
Each SGD update in \sonar takes $O(d)$ time for a total training time of $O(Td)$.
For the theoretically recommended choice of $d=O(D\log(D/\delta))$ (as justified in \Cref{app:features}), this gives a worst-case total runtime of $O(TD\log(D/\delta))$.

In contrast, solving the offline OCSVM at step $t$ using the seen data incurs higher train time.
Solving the dual convex quadratic program to \Cref{eq:objective-unconstrained} takes worst-case $O(t^3)$ time for a total time of $O(T^4)$.
Also, computing the gram matrix requires time $O(T^2 D)$.
If one uses SGD to solve the primal \Cref{eq:dual}, a total train time of at least $\Omega(T^2D)$ is still incurred just for gradient updates.
Using SMO to solve OCSVM also incurs $\Omega(T^2D)$ time to check KKT violations \citep{platt1998sequential}.


\paragraph*{Evaluation Time.}
\sonar requires $O(d)$ evaluation time at every step.
For the recommended choice of $d = O(D \log(D/\delta))$, this is smaller than that of batch OCSVM, which is worst-case $O(t D)$ at step $t$.

\paragraph*{Space Complexity.}
Offline OCSVM requires a memory of $O(TD+T^2)$ to store the gram matrix, which can be reduced using an SMO solver to $O(TD)$.
Meanwhile, \sonar only needs $O(d)$ memory, which is free of $t$.

\section{SONAR's Transfer Learning Guarantees}

We now tackle the more challenging non-stationary setting where the normal data distribution $P_{X_t}$ changes over time.
We first show that \sonar (\Cref{alg:sgd}) benefits from the past under favorable distribution shifts, without the need for any resets in learning.
We express such results as generalization error bounds and margin lower bounds akin to \Cref{cor:type-1-sgd} and \Cref{thm:margin-LB}, in terms of the quality of the iterate $(w_{t_0},\rho_{t_0})$ at a changepoint $t_0$.

\begin{thm}[Transferring Quality of Past Iterates]\label{thm:transfer-type-1}
	Suppose the data $X_{t_0},\ldots,X_t$ over period $[t_0,t]$ are i.i.d. from some distribution on $\mb{S}^{d-1}$.
	Let $(w_t,\rho_t)$ be the $t$-th iterate of \Cref{alg:sgd} using step sizes $\eta_t := 1/(t+1)$.
	Then, we have for steps $t > t_0$: 
	\begin{align}
		\mb{P} ( \langle w_t, X_t \rangle < \rho_t) \leq \underbrace{ \mb{P} \left( \langle w_{t_0}, X_t \rangle < \rho_{t_0} + \lambda \cdot \frac{t - t_0}{t_0}  \right)}_{\text{\normalfont\bf Benefit of Past}} \land \underbrace{\mb{P} \left( \langle w_{\lambda}, X_t \rangle < \rho_{\lambda} + \tO \left( \frac{1}{\sqrt{t - t_0}} \right) \right)}_{\text{\normalfont\bf Agnostic Guarantee}} \label{eq:transfer-1}\, .
	\end{align}
	Additionally, the $t$-th iterate's margin $r_t$ satisfies:
	\begin{align}
		r_t := \frac{\rho_t}{\norm{w_t}_2} \geq r_{t_0} + \sum_{s=t_0+1}^t \frac{ \lambda - Z_s \cdot (1 + \abs{r_{s-1}})}{ (s-1) \cdot \norm{w_{s-1} + Z_s X_s / (s-1) }_2}
		\numberthis\label{eq:margin-LB} \, .
	\end{align}

\end{thm}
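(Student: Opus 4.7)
My approach is to unroll the SGD recursion into closed form and then read off both Type I-error bounds and the margin bound from the resulting expression. Since $\eta_{s-1} = 1/s$, the update in \Cref{line:w} rearranges to $s w_s = (s-1) w_{s-1} + Z_s X_s$, and similarly $s \rho_s = (s-1) \rho_{s-1} + \lambda - Z_s$. Telescoping over $s = t_0+1, \ldots, t$ yields
\begin{align*}
t\, w_t = t_0\, w_{t_0} + \sum_{s=t_0+1}^t Z_s X_s, \qquad t\, \rho_t = t_0\, \rho_{t_0} + \sum_{s=t_0+1}^t (\lambda - Z_s).
\end{align*}
A side-induction using $\|X_s\|_2 = 1$, $Z_s \in \{0,1\}$, and $\lambda \in [0,1]$ preserves $\|w_s\|_2 \leq 1$ and $|\rho_s| \leq 1$ throughout, so in particular $(w_{t_0}, \rho_{t_0})$ is admissible as an initializer for \Cref{lem:high-prob-distance-SC}.

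\textbf{Type I-error bounds.} Substituting the closed form directly gives
\begin{align*}
\rho_t - \langle w_t, X_t \rangle = \tfrac{t_0}{t}\bigl(\rho_{t_0} - \langle w_{t_0}, X_t\rangle\bigr) + \tfrac{1}{t}\sum_{s=t_0+1}^t \bigl(\lambda - Z_s(1 + \langle X_s, X_t\rangle)\bigr).
\end{align*}
Since $X_s, X_t \in \mb{S}^{d-1}$, $1 + \langle X_s, X_t\rangle \geq 0$, so the last sum is at most $(t-t_0)\lambda/t$. Hence $\rho_t > \langle w_t, X_t\rangle$ forces $\rho_{t_0} - \langle w_{t_0}, X_t\rangle > -\lambda(t-t_0)/t_0$, giving the ``Benefit of Past'' bound after taking probabilities. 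The ``Agnostic Guarantee'' follows by re-applying \Cref{lem:high-prob-distance-SC} starting from $(w_{t_0}, \rho_{t_0})$ on the i.i.d.\ data over $[t_0+1, t]$, yielding $\|(w_t,\rho_t) - (w_\lambda,\rho_\lambda)\|^2 \leq \tO(1/(t-t_0))$ with high probability, followed by the same Cauchy--Schwarz step on $\rho_t - \langle w_t, X_t\rangle - (\rho_\lambda - \langle w_\lambda, X_t\rangle)$ as in the proof of \Cref{thm:type-1-sgd}.

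\textbf{Margin bound.} From $s w_s = (s-1)\bigl(w_{s-1} + Z_s X_s/(s-1)\bigr)$ and the analogous expression for $\rho_s$, defining $\tilde d_s := \|w_{s-1} + Z_s X_s/(s-1)\|_2$ yields
\begin{align*}
r_s - r_{s-1} = \tfrac{r_{s-1}(\|w_{s-1}\|_2 - \tilde d_s)}{\tilde d_s} + \tfrac{\lambda - Z_s}{(s-1)\tilde d_s}.
\end{align*}
The triangle inequality gives $\bigl|\tilde d_s - \|w_{s-1}\|_2\bigr| \leq Z_s/(s-1)$, and a case analysis on $Z_s \in \{0,1\}$ together with the sign of $r_{s-1}$ shows the first summand is at least $-Z_s|r_{s-1}|/((s-1)\tilde d_s)$. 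Combining and telescoping from $s = t_0+1$ to $t$ yields \Cref{eq:margin-LB}.

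\textbf{Main obstacle.} The most delicate step is the Agnostic Guarantee: because the step sizes $1/s$ for $s > t_0$ are smaller than the ``fresh-start'' $1/(s - t_0)$ implicit in \Cref{lem:high-prob-distance-SC}, the contribution of the initial error $\|(w_{t_0},\rho_{t_0})-(w_\lambda,\rho_\lambda)\|^2$ contracts only by a factor $t_0/t$ rather than vanishing. A Harvey-style martingale concentration, adapted to the shifted $1/s$ step-size schedule, is needed to absorb this residual initial error into the stated $\tO(1/(t-t_0))$ rate.
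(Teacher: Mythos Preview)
Your proposal is correct and takes essentially the same approach as the paper: both unroll the SGD recursion into closed form (your telescoping $s\,w_s = (s-1)\,w_{s-1} + Z_s X_s$ is a cleaner but equivalent rewriting of the paper's $H_t, J_{s,t}$ products), both use $1 + \langle X_s, X_t\rangle \geq 0$ to obtain the Benefit-of-Past bound, and both derive the margin bound by computing $r_s - r_{s-1}$ and applying the triangle inequality $\bigl|\|w_{s-1} + Z_s X_s/(s-1)\|_2 - \|w_{s-1}\|_2\bigr| \leq Z_s/(s-1)$. Your flagging of the Agnostic Guarantee as the delicate step is apt; the paper's own proof only establishes the Benefit-of-Past and margin inequalities explicitly and does not separately address the shifted-step-size issue you identify.
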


\begin{wrapfigure}{r}{0.4\textwidth}
    \centering
    \vspace{-1em}
    \includegraphics[width=0.8\textwidth]{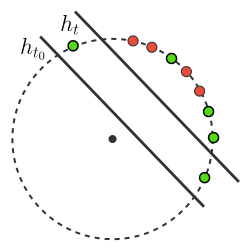}
    \caption{\small An example where the acceptance region may shrink from $t_0$ (normal data colored in green) to $t$ (normal data colored in red) with corresponding hyperplanes denoted by their classifiers $h_{t_0},h_t$.}
    \label{fig:ocsvm2}
    \vspace{-1em}
\end{wrapfigure}

The generalization error bound \Cref{eq:transfer-1} shows that for small $t-t_0$, we benefit from the generalization error of $(w_{t_0},\rho_{t_0})$, while for larger $t-t_0$, we can default to the agnostic guarantee of \Cref{thm:type-1-sgd}.

From the margin lower bound in \Cref{eq:margin-LB}, we see that $\lambda$ controls a tradeoff in Type I and II errors.
For steps where there is no misclassification $Z_s=0$ (i.e., low Type I error), the margin $r_t$ increases by an amount proportional to $\lambda$, thus reinforcing smaller Type II error for alternatives near the decision boundary.
On the other hand, misclassification $Z_s=1$ (more likely for large $\lambda$) lowers the margin.




Additionally, the lower bound \Cref{eq:margin-LB} ``transfers'' the margin from step $t_0$ to $t$ up to an additive error determined by the misclassifications from $t_0$ to $t$.
Thus, if the margin of the past iterate $r_{t_0}$ was already large and its Type I error on target data is small, we improve the margin on target data.

Additionally, \Cref{thm:transfer-type-1} has application to the stationary setting where data is i.i.d. from steps $1$ to $t$.
Letting $t_0=0$, we obtain more refined Type I error and margin bounds than those of \Cref{thm:type-1-sgd,thm:margin-LB}, depending on the quality of the initializer $(w_0,\rho_0)$.

However, in the face of adversarial distribution shifts, past iterates may not confer any benefits on Type I or II error, necessitating changepoint detection and restarts in learning.





\section{Adaptive Guarantees under Non-Stationarity}\label{sec:cpd}

Next, we design a procedure which adapts to any non-stationary environment, by ensuring similar-order small Type I error (\Cref{cor:type-1-sgd}) and large margin property (\Cref{thm:margin-LB}) within each stationary phase of time.
In particular, our aim as in other works on ensemble non-stationary online learning \citep{herbster95,haussler98,auer2002nonstochastic,hazan07,daniely15} is to mimic the performance of an oracle restarting variant of \sonar which resets its learning rate at changepoints.

For simplicity, let's suppose we know the total number of steps $T$ (we can circumvent this with a time-varying learning rate and doing the analysis along doubling periods).
The high-level idea is to run multiple copies of \sonar which independently reset their learning rates at different schedules.
We call each such instantiation a {\em base algorithm}.
We detect a change in the environment using the iterates of different base learners.
Intuitively, if the population minimizer $(w_{\lambda},\rho_{\lambda})$ remains fixed, then the iterates of all base learners will converge to $(w_{\lambda},\rho_{\lambda})$.
Thus, we detect a change in the environment if there is large discrepancy in the iterates beyond the SGD convergence threshold of \Cref{lem:high-prob-distance-SC}.
\Cref{alg:sonarc} captures the overall procedure.

\IncMargin{1em}
\begin{algorithm2e}
	\caption{{\sonar with \bld{C}hangepoint Detection (SONARC)}}
 \label{alg:sonarc}
 \bld{Input}: time horizon $T$, anticipated outlier proportion $\lambda$, error control $\delta$, changepoint detection threshold $C>0$.\\
 \For{$t=1,2,\ldots,T$}{
	 Observe $X_t$ and update each of the bases $\sonarbase(m)$ for $m \in \{1, \ldots, \floor{\log(T)}\}$.\\
	Let $(w_{t,m},\rho_{t,m})$ denote the most recent ``final iterate'' of $\sonarbase(m)$ before resetting its learning rate.\\
	If
	\begin{align}\label{eq:cpd}
		&\| (w_t, \rho_t) - ( w_{t,m}, \rho_{t,m}) \|_2^2 \geq C \cdot \frac{ \log(T) \cdot \log(1/\delta) }{2^m} \, ,
	\end{align}
	then restart the whole procedure with time horizon $T-t$. 
	}
\end{algorithm2e}
\DecMargin{1em}


We present two main theoretical guarantees for \sonarc.
We first show a {\em safety guarantee}: that w.h.p. a change is detected \Cref{eq:cpd} only if there is a change in $(w_{\lambda},\rho_{\lambda})$.

\begin{thm}[Safety of CPD]\label{thm:safety}
	If \Cref{eq:cpd} is triggered at time $t$ within \Cref{alg:sonarc}, then for a suitably large constant\footnote{$C$ must be a large enough constant, but is free of other problem parameters such as $\delta,\lambda, T$.} $C>0$ with probability $1-\delta$, there is a change in the population minimizer $(w_{\lambda}(P_{X_s}),\rho_{\lambda}(P_{X_s}))$ for some $s < t$.
\end{thm}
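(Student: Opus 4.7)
The plan is to prove the safety guarantee by contraposition: assume the population minimizer $(w_\lambda(P_{X_s}), \rho_\lambda(P_{X_s}))$ remains a common element $(w_\lambda, \rho_\lambda)$ for every $s \leq t$, and show that under this stationarity assumption the CPD test \eqref{eq:cpd} fails for every scale $m$ with probability at least $1-\delta$. This reduces the claim to a uniform convergence statement for the base learners' iterates.

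Under the stationarity hypothesis, each $\sonarbase(m)$ is, between its dyadic learning-rate resets, running \Cref{alg:sgd} on an i.i.d.\ sample from $P_X$ with initializer carried over from its previous reset. Consequently, $(w_{t,m}, \rho_{t,m})$ --- the most recent final iterate of $\sonarbase(m)$ before a reset --- is the output of SGD on roughly $2^m$ i.i.d.\ samples. Applying \Cref{lem:high-prob-distance-SC} with $T$ replaced by $2^m$ yields, with probability at least $1-\delta'$,
\[
\| (w_{t,m}, \rho_{t,m}) - (w_\lambda, \rho_\lambda) \|_2^2 \leq O\!\left( \frac{\log(T)\, \log(1/\delta')}{2^m} \right),
\]
using $\log(2^m) \leq \log(T)$. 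An analogous convergence bound applies to $(w_t, \rho_t)$, which by construction of \Cref{alg:sonarc} corresponds to an iterate of a base learner having observed at least $\Omega(2^m)$ i.i.d.\ samples. The triangle inequality then gives
\[
\| (w_t, \rho_t) - (w_{t,m}, \rho_{t,m}) \|_2^2 \leq 2\|(w_t,\rho_t) - (w_\lambda,\rho_\lambda)\|_2^2 + 2\|(w_{t,m},\rho_{t,m}) - (w_\lambda,\rho_\lambda)\|_2^2 \leq C' \cdot \frac{\log(T)\, \log(1/\delta')}{2^m},
\]
for some absolute constant $C'$. Choosing $C > C'$ in \eqref{eq:cpd} rules out triggering.

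To obtain this bound uniformly across all times $t \in [T]$, all scales $m \in \{1,\ldots,\lfloor \log T \rfloor\}$, and all possible reset epochs that could realize $(w_{t,m}, \rho_{t,m})$, I would union bound over the $O(T^2 \log T)$ such events with $\delta' := \delta / (T^2 \log T)$. Since $\log(1/\delta') = O(\log(T/\delta))$, the extra logarithmic factor only inflates the right-hand side by a constant, which is absorbed into $C$. The contrapositive then yields the stated claim.

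The main obstacle I expect is pinning down the convergence guarantee on $(w_t, \rho_t)$, whose interpretation within \Cref{alg:sonarc} determines how many i.i.d.\ samples it has been trained on; for the triangle inequality to close, $(w_t,\rho_t)$ must admit a convergence rate at least as fast as $O(\log(T)\log(1/\delta)/2^m)$ for every scale $m$ being tested. A secondary bookkeeping issue is verifying that the initializer conditions $\|w_0\|_2 \leq 1$ and $|\rho_0| \leq 1$ required by \Cref{lem:high-prob-distance-SC} are preserved across resets; this should follow from a short induction showing the SGD iterates of \Cref{alg:sgd} remain bounded under \Cref{ass:unit-sphere}, but must be checked so that \Cref{lem:high-prob-distance-SC} can be invoked uniformly at each reset epoch.
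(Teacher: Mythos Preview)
Your proposal is correct and follows essentially the same approach as the paper: argue by contraposition, apply \Cref{lem:high-prob-distance-SC} to both $(w_{t,m},\rho_{t,m})$ and $(w_t,\rho_t)$, use that $t \geq 2^m$ whenever the test at scale $m$ is checked so the latter's rate is at least as fast, and conclude via the triangle inequality. Your additional union bound over times, scales, and reset epochs is more careful than the paper's proof (which treats a single $t$ and does not explicitly union bound), and your bookkeeping concern about the initializer is handled in the paper's proof of \Cref{lem:high-prob-distance-SC}, where it is shown that the iterates remain in $\{\|w\|_2\leq 1,\ |\rho|\leq 1\}$ under \Cref{ass:unit-sphere}.
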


Our next result shows \sonarc in fact adapts to unknown changes and recovers guarantees similar to \Cref{cor:type-1-sgd} and \Cref{thm:margin-LB} for each phase.
Crucially, \sonarc does not need to know the timing or number of stationary phases, and so is fully adaptive to unknown non-stationarity.
Additionally, \sonarc only detects changes in the population minimizer $(w_{\lambda},\rho_{\lambda})$ of \Cref{eq:regularized-sc}.
In particular, \sonarc's guarantees in the next result are invariant to changes in the makeup of the normal data distribution $P_X$ which leave the decision boundary $(w_{\lambda},\rho_{\lambda})$ intact.
To contrast, using an off-the-shelf changepoint detector could lead to unnecessary restarts as we demonstrate experimentally in \Cref{subsec:off-the-shelf}.

\begin{thm}[Adapting to Unknown Phases]\label{thm:adapt-cpd}
	Suppose we run \Cref{alg:sonarc} on an environment with stationary phases $\{ \mc{P}_i\}_{i=1}^L$, each of length at least $\tilde{\Omega}(\log(1/\delta) \cdot (\eps \cdot \lambda)^{-2})$ and with margin $r_i^*$ (see \Cref{ass:unit-sphere}).
	Then, on each phase $\mc{P}_i$ where a restart does not occur, we have with probability $1- \delta \cdot |\mc{P}_i|$: for steps $t$ in the second half of $\mc{P}_i$:
	\begin{align*}
		\mb{P}_{X\sim \mc{P}_i}(\langle w_t, X\rangle < \rho_t \cdot (1-\eps)) \leq \lambda \andtextshort
		r_t \geq r_i^* \cdot (1-\eps) \, .
	\end{align*}
\end{thm}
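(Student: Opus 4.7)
The plan is to show that within each non-restart phase $\mathcal{P}_i$, one of the dyadically-reset base learners in \Cref{alg:sonarc} will have trained on a long stretch of i.i.d.\ samples from $P_{X_{\mathcal{P}_i}}$ entirely contained in $\mathcal{P}_i$, and then to transfer that base learner's convergence to the reference iterate $(w_t,\rho_t)$ via the fact that the CPD threshold \Cref{eq:cpd} is never crossed. Let $t_i$ denote the start of $\mathcal{P}_i$ and fix $t$ in its second half, so $t-t_i \ge |\mathcal{P}_i|/2 \ge \tilde{\Omega}((\eps\lambda)^{-2}\log(1/\delta))$. Pick $m^\ast \in \{1,\ldots,\lfloor\log T\rfloor\}$ to be the largest index with $2^{m^\ast+1}\le t-t_i$. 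Since $\sonarbase(m)$ resets on a fixed grid of spacing $2^m$, this choice forces both its currently running epoch and the most recently completed one (which defines $(w_{t,m^\ast},\rho_{t,m^\ast})$) to begin strictly after $t_i$; hence $(w_{t,m^\ast},\rho_{t,m^\ast})$ is the final iterate of SGD on $n = \Theta(2^{m^\ast}) = \Theta(|\mathcal{P}_i|)$ i.i.d.\ samples from $P_{X_{\mathcal{P}_i}}$.

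Applying \Cref{lem:high-prob-distance-SC} to this aligned epoch gives, with probability $1-\delta$,
\[
\big\|(w_{t,m^\ast},\rho_{t,m^\ast})-(w_\lambda^{(i)},\rho_\lambda^{(i)})\big\|^2 \;\le\; O\!\left(\frac{\log(T)\log(1/\delta)}{2^{m^\ast}}\right),
\]
where $(w_\lambda^{(i)},\rho_\lambda^{(i)})$ is the minimizer of \Cref{eq:regularized-sc} under $P_{X_{\mathcal{P}_i}}$. Because no restart occurs on $\mathcal{P}_i$, \Cref{eq:cpd} fails at time $t$ for $m=m^\ast$, i.e.\ $\|(w_t,\rho_t)-(w_{t,m^\ast},\rho_{t,m^\ast})\|^2 < C\log(T)\log(1/\delta)/2^{m^\ast}$; the triangle inequality then yields $\|(w_t,\rho_t)-(w_\lambda^{(i)},\rho_\lambda^{(i)})\|^2 \le O(\log(T)\log(1/\delta)/|\mathcal{P}_i|)$.

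Under the phase-length assumption, this parameter-space bound matches (up to constants) the one driving the proofs of \Cref{cor:type-1-sgd} and \Cref{thm:margin-LB}. Repeating those arguments, with $(w_t,\rho_t)$ in the role of the SGD iterate and $(w_\lambda^{(i)},\rho_\lambda^{(i)})$ in the role of the population minimizer (whose Type I and margin properties under $P_{X_{\mathcal{P}_i}}$ come from applying \Cref{prop:small-type-1} and \Cref{prop:large-margin-property} to $\mathcal{P}_i$), yields the Type I bound $\mb{P}_{X\sim\mathcal{P}_i}(\langle w_t, X\rangle < \rho_t(1-\eps))\le \lambda$ and the margin bound $r_t \ge r_i^\ast(1-\eps)$. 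A union bound over the $|\mathcal{P}_i|$ time steps within the phase produces the stated failure probability $\delta\cdot|\mathcal{P}_i|$.

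I anticipate the main difficulty is the dyadic alignment argument in the first paragraph: one must verify that $(w_{t,m^\ast},\rho_{t,m^\ast})$, the ``most recent final iterate before a reset'' in \Cref{alg:sonarc}, corresponds to a full epoch lying strictly inside $\mathcal{P}_i$, so that the i.i.d.\ hypothesis of \Cref{lem:high-prob-distance-SC} is satisfied and the convergence rate is driven by the phase length rather than the global horizon $T$. The condition $2^{m^\ast+1}\le t-t_i$ is exactly what guarantees this while keeping $2^{m^\ast}=\Theta(|\mathcal{P}_i|)$ large enough to yield an $O(1/|\mathcal{P}_i|)$ rate. A secondary complication is tracking the $\log(T)$ overhead incurred by union-bounding across the $\lfloor\log T\rfloor$ base learners and across the time steps in $\mathcal{P}_i$, ensuring the $\eps$-slackness in the Type I threshold and margin bound is not absorbed by it.
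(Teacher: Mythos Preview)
Your proposal is correct and follows essentially the same approach as the paper's proof: locate a base learner $\sonarbase(m)$ whose most recently completed epoch lies entirely inside $\mc{P}_i$, apply \Cref{lem:high-prob-distance-SC} to that epoch, transfer the convergence to $(w_t,\rho_t)$ via the non-triggering of \Cref{eq:cpd} and the triangle inequality, and finally rerun the arguments of \Cref{cor:type-1-sgd} and \Cref{thm:margin-LB}. The only cosmetic differences are that the paper fixes one dyadic interval $I$ in the first quarter of $\mc{P}_i$ (independent of $t$) rather than choosing $m^\ast$ as a function of $t$, and it handles the first phase $i=1$ separately by appealing directly to \Cref{cor:type-1-sgd} and \Cref{thm:margin-LB} for the main iterate $(w_t,\rho_t)$; both choices lead to the same $O(\log(T)\log(1/\delta)/|\mc{P}_i|)$ parameter-space bound and the same conclusion.
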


Thus, \Cref{thm:adapt-cpd} states that for each phase, \sonarc either restart or maintains gurantees on Type I error and margin similar to an oracle restarting procedure.


\section{Conclusion/Discussion}

We introduced \sonar, an efficient SGD-based algorithm for streaming one-class classification that achieves provable guarantees on both Type I and II errors.
A key finding is that \sonar admits favorable transfer learning properties under benign distribution shifts -- that is, past iterates confer benefits on both Type I error and margin without requiring explicit knowledge of source and target domains.
For general non-stationarity, \sonarc extends \sonar with a principled changepoint detection mechanism that adapts to unknown distribution shifts while avoiding unnecessary restarts.

While our Type II error guarantees are expressed through margin bounds, future work could derive direct Type II error bounds for various outlier models.
It is also left open to study how labeled outlier feedback could be incorporated in \sonar's design principles and further improve guarantees.

\section*{Acknowledgements}

This work was supported by the National Science Foundation under Grant No. 2334997, ``Collaborative Research: EAGER: CPS: Data Augmentation and Model Transfer for the Internet of Things''.
We thank Yuyang Deng for help with processing the IoT datasets.
We also acknowledge computing resources from Columbia University’s Shared Research Computing Facility project, which is supported by NIH Research Facility Improvement Grant 1G20RR030893-01, and associated funds from the New York State Empire State Development, Division of Science Technology and Innovation (NYSTAR) Contract C090171, both awarded April 15, 2010

\bibliographystyle{joe}
\bibliography{bibs/bandit_general,
bibs/nonstat,
bibs/online,
bibs/duel,
bibs/contextual,
bibs/nonpar,
bibs/drift,
bibs/slow,
bibs/suk,
bibs/rot,
bibs/rl,
bibs/squarecb,
bibs/outlier
}

\begin{appendices}
\appendix
\section{A Lower Bound on the Number of Needed Random Fourier Features}\label[appendix]{app:features}

Here, we investigate \Cref{ass:unit-sphere} more closely and derive a lower bound $r_{\min}$ on the margin separating the support from the origin.
We first study the RBF kernel without the use of Random Fourier Features.
For an RBF kernel, we have that if the untransformed data $x$ lie in a set of diameter at most $B$, then:
\[
	K(x,x') := \exp\left( - \gamma \cdot \|x-x'\|_2^2 \right) \geq \exp(- \gamma \cdot B^2 ) \geq r_{\min} \iff B \leq \sqrt{\gamma^{-1} \log(r_{\min}^{-1})} \, .
\]
In particular, for untransformed data lying in a ball of diameter $1$ and $\gamma = 0.5$, the above second inequality is satisfied for $r_{\min} = 1/2$.
Then, $K(x,x') \geq r_{\min} = 1/2 > 0$ for all $x,x' \in \supp(P_X)$ implies the support of the transformed data lies in the interior of a fixed hemisphere since the angle between points in $\mc{H}$ is lower bounded by $1/2$.

We next show that $r_{\min}$ lower bounds the margin of the population OCSVM problem \Cref{eq:objective-normalized}.
Let $w^*$ be the normalized solution to \Cref{eq:objective-normalized}.
We then have:
\[
	r^* = \min_{x \in P_X} \langle \vphi(x), w^* \rangle_{\mc{H}} \geq \min_{x,x' \in P_X} \langle \vphi(x), \vphi(x') \rangle_{\mc{H}} \geq r_{\min} \, ,
\]
where the first inequality above holds because $w^*$ lies in the convex hull of the support vectors.

We next generalize these claims to the Random Fourier Feature approximation of the RBF kernel.
For RFF map $z:\mb{R}^D \to \mb{R}^d$ approximating the kernel embedding $\phi:\mc{X} \to \mc{H}$, we desire accuracy
\begin{equation}\label{eq:approx-kernel}
	|z(x)^Tz(x') - K(x,x')| \leq r_{\min}/2,
\end{equation}
to ensure that the transformed data in the RFF RKHS also lies in a spherical cap with angle at least $r_{\min}/2$ or:
\[
	\min_{x,x' \in \mc{X}} z(x)^Tz(x') \geq r_{\min}/2 \, .
\]
This will ensure there is a similar order margin in the RFF RKHS.
By a uniform convergence argument on the randomness of selecting features for $z$ \citep[Proposition 1]{sutherland2015rfferror}, \Cref{eq:approx-kernel} will hold w.p. $1-\delta$ for $d = \Omega\left( \frac{D}{r_{\min}^2} \log\left( \frac{D}{\delta \cdot r_{\min}^2} \right) \right)$ features where recall $D$ is the ambient dimension of the untransformed data $\mc{X}$.

\section{Proofs of Theoretical Results}\label[appendix]{app:proofs}

\subsection{Proof that OCSVM ERM-Form Objective is Not Strongly Convex}\label[appendix]{app:proof-ocsvm-not-sc}

We show the OCSVM objective of \Cref{eq:obj-rff} may in general not be strongly convex.
Consider a point-mass distribution at $X$.
We simplify the objective as:
\[
	f(w,\rho) := \frac{\lambda}{2} w^Tw - \lambda \rho + (\rho - w^TX)_+ \, .
\]
When $\rho \neq w^TX$ this objective is smooth in the parameters and has Hessian
\[
	\nabla^2 f = \begin{pmatrix} \lambda \cdot \Id_{d\times d} & \pmb{0}_d^T \\
		\pmb{0}_d & 0
	\end{pmatrix} \, ,
\]
which is not uniformly positive definite.
This means the objective $f(w,\rho)$ is not strongly convex in $(w,\rho)$. \hfill\qedsymbol

\subsection{Proof of Strong Convexity of \sonar Objective \Cref{eq:regularized-sc} (\Cref{prop:obj-sc})}

It suffices to show $(w,\rho) \mapsto -\lambda\cdot \rho + \mb{E}_{X\sim P_X}[ (\rho - w^TX)_+ ]$ is convex.
Because this latter function is the sum of an affine function and the function $(w,\rho) \mapsto \mb{E}_{X\sim P_X}[ (\rho - w^TX)_+ ]$, it suffices to show this expectation is convex in $(w,\rho)$.
To wit, note $(\rho,w) \mapsto (\rho - w^TX)_+$ is convex being the composition of a non-decreasing function with an affine (and thus convex) function.
Then, since expectations preserve convexity, the proof is finished. \hfill\qedsymbol

\subsection{Proof that Minimizer of \sonar Objective \Cref{eq:regularized-sc} has Small Type I Error (\Cref{prop:small-type-1})}\label[appendix]{app:proof-small-type-1}

First, we note a zero-norm choice of $w=0$ turns the objective \Cref{eq:regularized-sc} into a quadratic in $\rho$:
\[
	F(0,\rho) = \half \rho^2 - \lambda \rho + (\rho)_+ \, ,
\]
which has a minimal value of $0$ at $\rho = 0$ for $\lambda \in [0,1]$.
Then, we can rule out $w=0$ as a solution by showing that other choices of $(w,\rho)$ yield negative objective values.

Let's suppose $w\neq 0$.
We reparametrize the variables as $r := \rho / \|w\|_2$, $z := \|w\|_2$, and $v := w / \|w\|_2$.
We then rewrite the objective \Cref{eq:regularized-sc}, letting $L(r,v) := \mb{E}_{X\sim P_X}[ (r - v^TX)_+]$, we obtain:
\[
	F(r,z,v) = z^2 \cdot \left( \half + \half \cdot r^2 \right) - \lambda \, r \, z + z \cdot L(r,v).
\]
This is a convex quadratic in $z$ with feasible minimal value:
\[
	F(r,v) := - \frac{ ( - \lambda r + L(r,v))^2}{2 + 2r^2} \, .
\]
if $\lambda r > L(r,v)$.
Otherwise, $\inf_{z > 0} F(r,z,v) = 0$.
We can again rule out all such solutions for this latter case by demonstrating a choice of parameters with negative objective value.

Let $r^*$ be the support margin (\Cref{ass:unit-sphere}) and $v^*$ be the normalized solution to the population OCSVM objective \Cref{eq:objective-normalized}, i.e.:
\begin{equation}\label{eq:support-solution}
	(r^*,v^*) := \argmax_{v,r} \rho \suchthatshort \forall X \in \supp(P_X): \langle v, X \rangle \geq r , \norm{v}_{2} = 1 \, .
\end{equation}
Then, $r^*,v^*$ is feasible for minimizing $F(r,v)$ and has negative objective value of:
\[
	F(r^*,v^*) = - \frac{ ( - \lambda r^*)^2}{2 + 2 \cdot (r^*)^2 } \, .
\]
This allows us to rule out the zero-norm choice $w=0$ from before.

It remains to minimize $F(r,v)$ over $r,v$ such that $\lambda r > L(r,v)$.
First, note that $L(r,v)$ is convex in $(r,v)$ since $(r - v^TX)_+$ is convex, being the composition of a non-decreasing function with an affine function, and since expectations preserve convexity.
Next, we claim the subderivative
\[
	\partial_r L(r,v) = [ \mb{P}( v^TX < r) , \mb{P}( v^TX \leq r) ] \, .
\]
The above is found by computing the left and right derivatives of $L(r,v)$ with respect to $r$.
For the right-derivative, by bounded convergence theorem:
\[
	\lim_{h \downarrow 0} \frac{L(r+h,v) - L(r,v)}{h} =  \mb{E}_{X \sim P_X}\left[ \lim_{h\downarrow 0} \frac{(r+h - v^TX )_+ - (r - v^TX )_+}{h} \right] = \mb{P}(v^TX \leq r) \, .
\]
Similarly, for the left-derivative, again by bounded convergence:
\[
	\lim_{h \uparrow 0} \frac{L(r+h,v) - L(r,v)}{h} = \mb{E}[ \pmb{1}\{ v^TX < r \} ] = \mb{P}( v^TX < r) \, .
\]
Now, let $(r_{\lambda},v_{\lambda})$ be the minimizer of $F(r,v)$ for $r,v$ such that $\lambda r > L(r,v)$.
Next, note the set of $r$ such that $\lambda r > L(r,v)$ is a convex and open set as $L(r,v) - \lambda r$ is convex and continuous in $r$ for fixed $v$.
Then, by chain rule and first-order optimality condition for minimizing a convex function over an open convex domain:
\[
	0 \in \partial_{r} F(r_{\lambda} ,v_{\lambda} ) \implies \exists g \in [\mb{P}( v_{\lambda}^TX < r_{\lambda} ) , \mb{P}( v_{\lambda}^TX \leq r_{\lambda} ) ] :  g - \lambda = \frac{(L(r_{\lambda}, v_{\lambda}) - \lambda r_{\lambda} ) \cdot r_{\lambda}}{1 + r_{\lambda}^2} < 0 \, .
\]
This means we must have $\mb{P}_{X\sim P_X}(v_{\lambda}^TX < r_{\lambda}) \leq g < \lambda$. \hfill\qedsymbol

\subsection{Proof that Soft OCSVM Solution has Small Type I Error}\label[appendix]{app:ocsvm-lambda}

Here, we show that the population OCSVM objective \Cref{eq:objective-unconstrained} has a solution with generalization Type I error at most $\lambda$.
The proof is similar to \Cref{app:proof-small-type-1}.
We write the objective as
\[
	\min_{w,\rho} \half \|w\|_2^2 - \rho + \frac{1}{\lambda} \mb{E}_{X \sim P_X}[ (\rho - \langle w, X \rangle)_+ ].
\]
Now, reparametrizing $(w,\rho) \mapsto (w/\norm{w}_2, \rho/\norm{w}_2,\norm{w}_2) =: (v,r,z)$ in the same manner as \Cref{app:proof-small-type-1} yields an objective function of
\[
	F(r,z,v) = \frac{z^2}{2} - r \cdot z + \frac{z}{\lambda} L(r,v) \, ,
\]
for $L(r,v) := \mb{E}_{X\sim P_X}[ (r - v^TX)_+]$.
Then, we apply a similar quadratic analysis to find that all parameters $(r,z,v)$ for which $F(r,z,v) > \lambda$ are ruled out.
For fixed $r$ and $v$, $F(r,z,v)$ is a quadratic in $z$ with minimizer
\[
	\min_{z \geq 0} F(r,z,v) =: F(r,v) = \begin{cases}
		0 & \lambda r \leq L(r,v) \\
		- \half \left( r - \frac{L(r,v)}{\lambda} \right)^2 & \lambda r > L(r,v)
	\end{cases} \, .
\]
Now, the choice $(r^*,v^*)$ defined in \Cref{eq:support-solution} satisfies $L(r^*,v^*)=0$, $\lambda r^* \geq L(r^*,v^*)$ (since $r^* > 0$), and
\[
	F(r^*,v^*) = - \frac{ (r^*)^2 }{ 2 } < 0 \, .
\]
Since we've demonstrated a feasible negative objective value, this rules out all choices of $(r,v)$ for which $\lambda r \leq L(r,v)$.

Similar to \Cref{app:proof-small-type-1}, let $(r_{\lambda},v_{\lambda})$ minimize $F(r,v)$ over $r,v$ such that $\lambda r > L(r,v)$.
We then have that the first-order optimality condition for minimizing $F(r,v)$ over $r,v$ such that $\lambda r > L(r,v)$ implies
\[
	0 \in \partial_r F(r_{\lambda}, v_{\lambda}) \implies \exists g \in [ \mb{P}( v_{\lambda}^TX < r_{\lambda}) , \mb{P}( v_{\lambda}^TX \leq r_{\lambda}) ] : - \half \left( r_{\lambda} - \frac{L(r_{\lambda},v_{\lambda})}{\lambda} \right) \cdot \left( 1 - \frac{g}{\lambda} \right) < 0 \, .
\]
This implies $\mb{P}(v_{\lambda}^T X < r_{\lambda}) \leq g < \lambda$. \hfill\qedsymbol


\subsection{Proof that Minimizer of \sonar Objective \Cref{eq:regularized-sc} has Large Margin (\Cref{prop:large-margin-property})}\label[appendix]{app:proof-large-margin}

Using the notation of \Cref{app:proof-small-type-1}, again let $r := \rho/\norm{w}_2, z := \norm{w}_2$, and $v:= w/\norm{w}_2$ with $r_{\lambda},v_{\lambda}$ being the corresponding minimizers.
Also, recall $L(r,v) := \mb{E}_{X\sim P_X}[ (r - v^TX)_+]$ and $F(r,v) := - \frac{ (-\lambda r + L(r,v))^2}{2 + 2r^2}$ which is the minimal value of the objective over $z >0$ for fixed $r,v$ satisfying $\lambda r > L(r,v)$.
Let $(r^*,v^*)$ be the normalized solution to the OCSVM objective \Cref{eq:objective-normalized}.
We first note that since $(r_{\lambda},v_{\lambda})$ minimizes $F(r,v)$, we must have:
\[
	\frac{ ( - \lambda r_{\lambda} + L(r_{\lambda}, v_{\lambda}))^2}{ 1 + r_{\lambda}^2}  \geq \frac{ (\lambda r^*)^2}{1 + (r^*)^2}
\]
Let $G(r) := r/(1+r^2)^{1/2}$.
Then, taking the square root of both sides of the above display, we have one of:
\begin{align*}
	-\frac{L(r_{\lambda},v_{\lambda})}{\sqrt{1+r_{\lambda}^2}} + \lambda G(r_{\lambda}) \geq \lambda G(r^*) \qquad \text{ or } \qquad \frac{ L(r_{\lambda}, v_{\lambda})}{\sqrt{1 + r_{\lambda}^2}} \geq \lambda ( G(r_{\lambda}) + G(r^*))
\end{align*}
The second case cannot be true since we also have from \Cref{app:proof-small-type-1} that $L(r_{\lambda},v_{\lambda}) < \lambda r_{\lambda} = \lambda G(r_{\lambda}) \cdot (1 + r_{\lambda}^2)^{1/2}$.
The first case implies $G(r_{\lambda}) > G(r^*)$.
But, $G(r)$ is strictly increasing for positive $r$ so this implies $r_{\lambda} \geq r^*$. \hfill\qedsymbol

\subsection{Proof that Soft OCSVM Solution has Large Margin}\label[appendix]{app:ocsvm-large-margin}

Using the notation of \Cref{app:proof-ocsvm-not-sc}, again let $r := \rho/\norm{w}_2, z := \norm{w}_2$, and $v:= w/\norm{w}_2$ with $r_{\lambda},v_{\lambda}$ being the corresponding minimizers of the soft OCSVM objective \Cref{eq:objective-unconstrained}.
Also, recall $L(r,v) := \mb{E}_{X\sim P_X}[ (r - v^TX)_+]$ and $F(r,v) := - \half \left( r - \frac{L(r,v)}{\lambda} \right)^2$ which is the minimal value of the objective over $z >0$ for fixed $r,v$ satisfying $\lambda r > L(r,v)$.
Let $(r^*,v^*)$ be the normalized solution to the OCSVM objective \Cref{eq:objective-normalized}.
We first note that since $(r_{\lambda},v_{\lambda})$ minimizes $F(r,v)$ and since $L(r^*,v^*)=0$, we must have:
\[
	\left( r_{\lambda} - \frac{L(r_{\lambda}, v_{\lambda})}{\lambda} \right)^2 > (r^*)^2 \, .
\]
Now, since $r_{\lambda} > L(r_{\lambda}, v_{\lambda}) / \lambda \geq 0$, taking square root of both sides of the above display gives us $r_{\lambda} \geq r^*$. \hfill\qedsymbol

\subsection{Proof of SGD Convergence Guarantee (\Cref{lem:high-prob-distance-SC})}

The subgradient inequality for strongly convex function $F$ (namely, that $\pmb{0} \in \partial F(w_{\lambda}, \rho_{\lambda})$) gives us
\[
	\| (w_T, \rho_T ) - ( w_{\lambda}, \rho_{\lambda} ) \|^2 \leq 2 \cdot ( F(w_T, \rho_T) - F(w_{\lambda}, \rho_{\lambda})) \, .
\]
Thus, it suffices to bound the objective value gap of the iterate $(w_T,\rho_T)$ using the results of \citet{harvey19a}.

We first verify that our optimization problem satisfies the setting/assumptions of \citet{harvey19a}.
We claim that the noise of the subgradient estimate of $F(w,\rho)$ used in the update rules of \Cref{alg:sgd} have bounded norm.

\paragraph*{Subgradient Estimates are Unbiased and have Bounded Noise.}
Define $H_X(w,\rho) := (\rho - w^TX)_+$, which is the hinge-like risk appearing in the definition of $F(w,\rho)$ for a single datapoint $X$.
We then have
\[
	- X \cdot \pmb{1}\{ \rho \geq w^TX\} \in \partial_w H_X(w,\rho) \qquad \text{and} \qquad \pmb{1}\{ \rho \geq w^TX \} \in \partial_{\rho} H_X(w,\rho) \, ,
\]
are subgradients of $H_X(w,\rho)$.
Then, we claim $g_w := \mb{E}_{X\sim P_X}[ - X \cdot \pmb{1}\{ \rho \geq w^TX \} ]$ and $g_{\rho} := \mb{P}_{X\sim P_X}( \rho \geq w^TX)$ are subgradients of $\mb{E}_{X\sim P_X}[ H_X(w,\rho)]$.
This follows from the definition of subgradient:
\begin{align*}
	H_X(w,\rho) - H_X(w',\rho') &\geq \pmb{1}\{ \rho \geq w^TX\} \cdot (X,1)^T (w-w',\rho-\rho') \implies\\
	\mb{E}_{X\sim P_X}[ H_X(w,\rho) - H_X(w',\rho') ] &\geq \mb{E}_{X\sim P_X}[ \pmb{1}\{ \rho \geq w^TX\} \cdot (X,1)]^T (w-w',\rho-\rho') \, .
\end{align*}
Additionally, $ \norm{ g_w + X \cdot \pmb{1}\{ \rho \geq w^TX \} }_2^2 + \abs{ g_{\rho} - \pmb{1}\{ \rho \geq w^TX \} }^2 $ is bounded by a constant free of $w,\rho$ since $X$ has unit norm.
Thus, the subgradient estimates used in Lines~\ref{line:w} and \ref{line:rho} of \Cref{alg:sgd} are unbiased and have noise with bounded norm.

\paragraph*{Bounding Lipschitz Constant of Objective.}
Next, we claim $F(w,\rho)$ is Lipschitz in $(w,\rho)$ with $O(1)$ Lipschitz constant.
We first analyze the Lipschitz constant of the objective defined for a point-mass at $X$: $F_X(w,\rho) := \half (\norm{w}_2^2 + \rho^2) - \lambda \cdot \rho + (\rho - w^TX)_+$.
We first claim that so long as the initial iterate $(w_0,\rho_0)$ satisfies $\norm{w_0}_2 \leq 1$ and $\abs{\rho_0} \leq 1$, the SGD iterates $(w_t,\rho_t)$ stay within the convex set $\Theta := \{ (w,\rho) : \norm{w}_2 \leq 1 , \abs{\rho} \leq 1\}$.
This follows from writing the update rules:
\begin{align*}
	w_t &= w_{t-1} - \eta_{t-1}  (w_{t-1} - X_t \cdot \pmb{1}\{ \rho_{t-1} \geq \langle w_{t-1} , X_t \rangle \} ) = (1-\eta_{t-1} ) \cdot w_{t-1} + \eta_{t-1} \cdot X_t \cdot \pmb{1}\{ \rho_{t-1} \geq \langle w_{t-1} , X_t \rangle \} \\
	\rho_t &= \rho_{t-1} - \eta_{t-1}  ( \rho_{t-1} - \lambda + \pmb{1}\{ \rho_{t-1} \geq \langle w_{t-1} , X_t \rangle \} ) = (1-\eta_{t-1} ) \cdot \rho_{t-1} + \lambda \eta_{t-1} - \eta_{t-1}  \pmb{1}\{ \rho_{t-1} \geq \langle w_{t-1} , X_t \rangle \} \, ,
\end{align*}
whence we observe $w_t$ is a convex combination of vectors in the closed unit ball and $\rho_t$ is a convex combination of scalars in the interval $[-1,1]$.
Hence, the iterates $w_t$ and $\rho_t$ have norm at most $1$ throughout.

Then, it suffices to bound the subgradient norm on this compact parameter space $\Theta$.
The subdifferential of $F(w,\rho)$ is
\[
	\partial F_X(w,\rho) = (w, \rho - \lambda) + (-X, 1) \cdot \begin{cases}
		\{0\} & \rho - w^TX < 0\\
		[0,1] & \rho - w^TX = 0\\
		\{1\} & \rho - w^TX > 0
	\end{cases} \, ,
\]
with norm:
\begin{align*}
	\norm{ \partial F_X(w,\rho) }_2 &\leq \sup_{c \in [0,1]} \norm{ (w - c \cdot X , \rho - \lambda + c) }_2 \\
	&= \sup_{c \in [0,1]} \sqrt{ \norm{w - c \cdot X}_2^2 + (\rho - \lambda + c)^2 } \\
	&\leq \sup_{c\in [0,1]} \sqrt{ 1 + 1 + 2 \cdot ((\rho-\lambda)^2 + c^2)} \\
	&\leq \sqrt{2 + 2\cdot ((1  + \lambda)^2 + 1) } \, .
\end{align*}
This last bound is at most $\sqrt{12}$ for $\lambda \in [0,1]$.
This shows $F_X(w,\rho)$ is Lipschitz with Lipschitz constant $\sqrt{12}$.
It then follows $F(w,\rho) = \mb{E}_{X \sim P_X}[ F_X(w,\rho) ]$ has the same Lipschitz constant since by Jensen's inequality:
\[
	\abs{ F(w,\rho) - F(w',\rho') } \leq \mb{E}_{X\sim P_X} [ \abs{ F_X(w,\rho) - F_X(w',\rho') } ] \leq \sqrt{12} \cdot \norm{ (w,\rho) - (w',\rho')}_2 \, .
\]
Then, by Theorem F.1 of \citet{harvey19a}, we have w.p. $1-\delta$:
\[
	F(w_T, \rho_T) - F(w_{\lambda}, \rho_{\lambda} ) \leq O \left( \frac{\log(T) \log(1/\delta)}{T} \right).
\]
\hfill\qedsymbol

\subsection{Proof of Type I Error Bound of \sonar Final Iterate (\Cref{thm:type-1-sgd})}

First, we have:
\[
	\langle w_T, X \rangle < \rho_T \implies \langle w_{\lambda}, X \rangle - \norm{ w_T - w_{\lambda} }_2 < \rho_{\lambda} + \abs{ \rho_T - \rho_{\lambda}} \, .
\]
Then, plugging in bounds on $\norm{ w_T - w_{\lambda} }_2$ and $\abs{\rho_T - \rho_{\lambda}}$ from \Cref{lem:high-prob-distance-SC} finishes the proof. \hfill\qedsymbol

\subsection{Proof of Convergence of Type I Error for \sonar Final Iterate (\Cref{cor:type-1-sgd})}

Suppose $(w_T,\rho_T)$ is close to $(w_{\lambda},\rho_{\lambda})$ in the sense of $\|w_T - w_{\lambda}\|_2^2 + | \rho_T - \rho_{\lambda}|^2 \leq \Delta$.
First, by Cauchy-Schwarz and the fact that $X \in \mb{S}^{d-1}$, we have:
\[
	\langle  w_{\lambda} - w_T , X\rangle \leq \norm{ w_{\lambda} - w_T }_2 \cdot \norm{X}_2 = \norm{ w_{\lambda} - w_T }_2 \leq \Delta^{1/2} \, .
\]
Then
\begin{align*}
	\langle w_T, X \rangle &< \rho_T \cdot (1-\eps) \implies \\
	\langle w_{\lambda}, X \rangle + \langle w_T - w_{\lambda}, X \rangle &< \rho_{\lambda} + (\rho_T - \rho_{\lambda}) - \eps \cdot \rho_T \implies \\
	\langle w_{\lambda}, X \rangle - \Delta^{1/2} &< \rho_{\lambda} + \Delta^{1/2} - \eps\cdot \rho_T \implies \\
	\langle w_{\lambda}, X \rangle &< \rho_{\lambda} + 2 \Delta^{1/2} - \eps \cdot \rho_T.
\end{align*}
Then, it suffices to show $2 \Delta^{1/2} < \eps \cdot \rho_T$ whence the result follows from \Cref{prop:small-type-1}.
From our SGD convergence guarantee (\Cref{lem:high-prob-distance-SC}), we know that we can choose $\Delta := C_1 \cdot  T^{-1} \cdot \log(T) \cdot \log(1/\delta)$ for some constant $C_1 > 0$.

At the same time, by a lower bound on the population solution's margin (\Cref{prop:pop-norm-LB}; proven in \Cref{app:aux}), we have:
\[
	\rho_{\lambda} = \norm{w_{\lambda}}_2 \cdot r_{\lambda} \geq \frac{ \lambda \cdot r^* \cdot r_{\lambda} }{2} \, ,
\]
and that $\rho_T \geq \rho_{\lambda} - \Delta^{1/2}$.
Thus, it suffices to show
\[
	(C_1^{1/2} + \eps \cdot C_1^{1/2}) \cdot \frac{\log^{1/2}(T) \cdot \log^{1/2}(1/\delta)}{T^{1/2}} \leq \eps \cdot \frac{\lambda \cdot r^* \cdot r_{\lambda}}{2} \, .
\]
This is true for
\[
	T = \Omega( (r^* \cdot r_{\lambda})^{-2} \cdot \lambda^{-2} \cdot \eps^{-2} \cdot \log(1/\delta) \cdot \log(T) ) \, ,
\]
In particular, noting $r_{\lambda} \geq r^* \geq 1/2$ by \Cref{ass:unit-sphere}, we, for large enough $T$ in terms of $\delta, \lambda, \eps$:
\[
	\mb{P}_X( \langle w_T, X\rangle < \rho_T \cdot (1-\eps) ) \leq \mb{P}_X( \langle w_{\lambda}, X \rangle < \rho_{\lambda}  ) < \lambda\, ,
\]
where the last inequality is by \Cref{prop:small-type-1}. \hfill\qedsymbol

\subsection{Proof of Large Margin Property for \sonar Final Iterate (\Cref{thm:margin-LB})}

We first claim the solution $(w_{\lambda},\rho_{\lambda})$ to \Cref{eq:regularized-sc} satisfies $\| w_{\lambda}\|_2 \geq \lambda \cdot r^* / 2$ (\Cref{prop:pop-norm-LB}, proven in \Cref{app:aux}).
This gives us for some $C_1>0$ and by \Cref{lem:high-prob-distance-SC}:
\begin{align*}
	\| (\rho_T, w_T) - ({\rho}_{\lambda}, {w}_{\lambda}) \|_2 &\leq C_1 \cdot \frac{ \log^{1/2}(T) \cdot \log^{1/2}(1/\delta) }{T^{1/2}} \\
								&= C_1 \cdot \lambda \cdot r^* \cdot \frac{\sqrt{\log(T) \cdot \log(1/\delta)}}{\lambda \cdot r^* \sqrt{T}} \\
								&\leq 2 C_1  \cdot \|w_{\lambda}\|_2 \cdot \frac{\sqrt{\log(T) \cdot \log(1/\delta)}}{\lambda \cdot r^* \sqrt{T}},
\end{align*}
Let the above RHS be $\Delta$.
For $T = \Omega((r^* \cdot \lambda)^{-2} \cdot \log(T) \cdot \log(1/\delta))$, we have $\frac{\Delta}{2\norm{w_{\lambda}}_2} < 1$.
Then we use elementary algebraic manipulations (\Cref{lem:elementary-fraction} with \Cref{lem:margin-less-than-1} to assert $r_{\lambda} \leq 1$; both proven in \Cref{app:aux}) to get:
\[
	\frac{\rho_T}{\|w_T\|_2} \geq \frac{\rho_{\lambda} - \Delta}{\|w_{\lambda}\|_2 + \Delta} \geq r_{\lambda} - \frac{\Delta}{2 \|w_{\lambda}\|_2}.
\]
Furthermore, if $\frac{T}{\log(T)} = \Omega( (r^*)^{-4} \cdot (\eps \cdot \lambda)^{-2} \cdot \log(1/\delta))$, we have $\frac{\Delta}{2 \norm{w_{\lambda}}_2 } \leq \eps \cdot r_{\lambda}$ for any $\eps \in (0,1)$.
Since $r^* \geq 1/2$ by \Cref{ass:unit-sphere}, it suffices to take $\frac{T}{\log(T)} = \Omega( (\eps \cdot \lambda)^{-2} \cdot \log(1/\delta))$ whence $r_T \geq r_{\lambda} \cdot (1 - \eps) \geq r^* \cdot (1-\eps)$ by \Cref{prop:large-margin-property}. \hfill\qedsymbol

\subsection{Proof of Transfer Guarantees of \sonar (\Cref{thm:transfer-type-1})}

We first show the Type I error bound of \Cref{eq:transfer-1}.
First, we note that subgradients of the objective function $F(w,\rho,x) := \half ( \norm{w}^2 + \rho^2) - \lambda \cdot \rho + (\rho - w^TX)_+$ can be chosen as:
\begin{align*}
	\nabla_w F(w,\rho,X) &= w - X \cdot \pmb{1}\{ \rho \geq w^T X\} \\
	\nabla_{\rho} F(w,\rho,X) &= \rho - \lambda + \pmb{1}\{ \rho \geq w^T X\}.
\end{align*}
Then, we write update rules:
\begin{align*}
	w_t &= w_{t-1} - \eta_{t-1} \cdot (w_{t-1} - X_t \cdot \pmb{1}\{ \rho_{t-1} \geq \langle w_{t-1} , X_t \rangle \} ) = (1-\eta_{t-1} ) \cdot w_{t-1} + \eta_{t-1} \cdot X_t \cdot \pmb{1}\{ \rho_{t-1} \geq \langle w_{t-1} , X_t \rangle \} \\
	\rho_t &= \rho_{t-1} - \eta_{t-1} \cdot ( \rho_{t-1} - \lambda + \pmb{1}\{ \rho_{t-1} \geq \langle w_{t-1} , X_t \rangle \} ) = (1-\eta_{t-1} ) \cdot \rho_{t-1} + \lambda \eta_{t-1} - \eta_{t-1} \cdot \pmb{1}\{ \rho_{t-1} \geq \langle w_{t-1} , X_t \rangle \} \, .
\end{align*}
For shorthand, let $(w_0,\rho_0) = (w_{t_0}, \rho_{t_0})$.
Let $H_t := \prod_{s=t_0+1}^t (1-\eta_{s-1})$ and $J_{s,t} := \eta_{s-1} \cdot \prod_{u=s+1}^t (1-\eta_{u-1})$.
Inducting, we have closed-form formulas:
\begin{align}
	w_t &= H_t \cdot w_0 + \sum_{s=t_0+1}^t X_s \cdot \pmb{1}\{ \rho_{s-1} \geq \langle w_{s-1} , X_s \rangle \} \cdot J_{s,t} \numberthis\label{eq:closed-form-w} \\
	\rho_t &= H_t \cdot \rho_0 + \lambda \sum_{s = t_0+1}^t  J_{s,t} - \sum_{s=t_0+1}^t \pmb{1}\{ \rho_{s-1} \geq \langle w_{s-1} , X_s \rangle \} \cdot J_{s,t}. \numberthis\label{eq:closed-form-rho}
\end{align}
Next, letting $Z_t := \pmb{1}\{ \rho_{t-1} \geq \langle w_{t-1}, X_t \rangle\}$, we have for any $X \in \mc{X}$:
\begin{align}
	\langle w_t, X \rangle &< \rho_t \iff \nonumber\\
	H_t \cdot \langle w_0, X \rangle + \sum_{s = t_0+1}^t \langle X, X_s \rangle \cdot Z_s \cdot J_{s,t} &< H_t \cdot \rho_0 + \lambda \sum_{s = t_0+1}^t J_{s,t} - \sum_{s = t_0+1}^t Z_s \cdot J_{s,t} \iff \nonumber\\
	\langle w_0, X \rangle &< \rho_0 + \lambda \sum_{s = t_0+1}^t \frac{J_{s,t}}{H_t} - \sum_{s = t_0+1}^t \frac{J_{s,t}}{H_t} \cdot Z_s \cdot (1 + \langle X, X_s \rangle) \implies \label{eq:closed-form-initial}\\
	\langle w_0, X \rangle &< \rho_0 + \lambda \sum_{s = t_0+1}^t \frac{J_{s,t}}{H_t} ,\nonumber
\end{align}
where the last inequality follows from $1+\langle X, X_s \rangle \geq 0$ by virtue of $\supp(\mc{X})$ being contained in a fixed hemisphere of the unit sphere.

%
Now, recall our step size choice $\eta_t := \frac{1}{t+1}$.
We have by telescoping product $\prod_{u=t_0+1}^s (1 - \eta_{u-1}) = \frac{t_0}{s}$ and thus:
\[
	\sum_{s = t_0+1}^t \frac{J_{s,t}}{H_t} = \sum_{s=t_0+1}^t \frac{\eta_{s-1} }{\prod_{u=t_0+1}^s (1-\eta_{u-1}) } = \sum_{s=t_0+1}^t \frac{1}{s}\cdot \frac{1}{\frac{t_0}{s}} = \frac{t-t_0}{t_0}.
\]

Thus, we conclude
\[
	\langle w_t, X\rangle < \rho_t \implies \langle w_0, X \rangle < \rho_0 + \lambda \cdot \frac{t-t_0}{t_0}.
\]

We next show the margin lower bound for $r_t$ in \Cref{eq:margin-LB}.
Using the closed-form formulas for the iterates $w_t,\rho_t$, we have:
\begin{align*}
	r_t - r_{t-1} &= \frac{\rho_{t-1} - \frac{\eta_{t-1}}{1-\eta_{t-1}} \cdot ( Z_t - \lambda) }{ \norm{ w_{t-1} + Z_t \cdot X_t \cdot \frac{\eta_{t-1}}{1 - \eta_{t-1}} }_2 } - \frac{\rho_{t-1}}{\norm{w_{t-1}}_2 } \, .
%
\end{align*}
When $Z_t=0$, the above RHS simplifies to $\frac{\lambda}{(t-1) \cdot \norm{w_{t-1}}_2}$.
When $Z_t=1$, we further have
\begin{align}
	\frac{\rho_{t-1} - \frac{\eta_{t-1}}{1-\eta_{t-1}} \cdot ( 1 - \lambda) }{ \norm{ w_{t-1} + X_t \cdot \frac{\eta_{t-1}}{1 - \eta_{t-1}} }_2 } - \frac{\rho_{t-1}}{\norm{w_{t-1}}_2 } &= \frac{\rho_{t-1} \cdot \norm{w_{t-1}}_2 - \frac{1}{t-1} \cdot (1 - \lambda) \cdot \norm{w_{t-1}}_2 - \rho_{t-1} \cdot \norm{w_{t-1} + X_t /(t-1) }_2}{ \norm{w_{t-1}}_2 \cdot \norm{w_{t-1} + X_t / (t-1) }_2} \nonumber \\
																							     &= \frac{\rho_{t-1} \cdot (\norm{w_{t-1}}_2 - \norm{w_{t-1} + X_{t} / (t-1)}_2 ) - \frac{\norm{w_{t-1}}_2 }{t-1} \cdot (1 - \lambda)}{ \norm{w_{t-1}}_2 \cdot \norm{w_{t-1} + X_t / (t-1) }_2 } \, . \numberthis\label{eq:z1-bound}
\end{align}
Next, we have by triangle inequality and the fact that $\norm{X_{t}}_2 = 1$:
\[
	\rho_{t-1} \cdot ( \norm{w_{t-1}}_2 - \norm{w_{t-1} + X_{t} / (t-1) }_2 ) \geq - \frac{ \abs{ \rho_{t-1} } }{t-1} \, .
\]
Thus, the RHS of \Cref{eq:z1-bound} simplifies to:
\[
	- \frac{1 - \lambda + \abs{ r_{t-1} } }{(t-1) \cdot \norm{w_{t-1} + X_t/(t-1) }_2} \, .
\]
Summing over $t$ and telescoping, we get:
\[
	r_t \geq r_0 + \sum_{s=t_0+1}^t \frac{ \lambda - Z_s \cdot (1 + \abs{r_{s-1}})}{ (s-1) \cdot \norm{w_{s-1} + Z_s X_s / (s-1) }_2} \, .
\]

\subsection{Proof of \Cref{thm:safety}}

If there is no change and so a fixed optimizer $(w_{\lambda},\rho_{\lambda})$, then by \Cref{lem:high-prob-distance-SC} we have the high-probability SGD convergence guarantees: for some large enough $C>0$
\begin{align*}
	\|(w_{t,m}, \rho_{t,m}) - (w_{\lambda}, \rho_{\lambda}) \|_2^2 &\leq C \frac{\log(2^m) \cdot \log(1/\delta) }{2^m} \\
	\| (w_t,\rho_t) - (w_{\lambda}, \rho_{\lambda})\|_2^2 &\leq C \frac{\log(T) \cdot \log(1/\delta)}{t}.
\end{align*}
Since $t \geq 2^m$ whenever we check \Cref{eq:cpd}, the restart cannot be triggered as, by triangle inequality:
\[
	\norm{( w_{t,m},\rho_{t,m}) - (w_t,\rho_t) }_2^2 \leq 2C \frac{\log(T) \cdot \log(1/\delta)}{2^m} \, .
\]

\subsection{Proof of \Cref{thm:adapt-cpd}}

For the first phase, $i=1$, the desired guarantees already hold by \Cref{cor:type-1-sgd} and \Cref{thm:margin-LB} and a union bound over steps $t$ in the second half of $\mc{P}_1$.
We also won't unnecessarily restart during the first phase with probability $1-\delta \cdot |\mc{P}_1|/2$ by \Cref{thm:safety}.

Next, fix a phase $\mc{P}_i$ with $i\geq 2$, and let $I$ be a dyadic approximation of $\mc{P}_i$ such that $|I|=2^m$ for the smallest $m\in\mb{N}$ such that $2^m \geq |\mc{P}_i|/8$, and such that $I$ is contained entirely within the first quarter of $\mc{P}_i$.
Such an interval $I$ exists provided $|\mc{P}_i| \geq 2^4$ which is ensured by hypothesis since $|\mc{P}_i|$ is a sufficiently large constant.

Let $(w_I,\rho_I)$ be the final iterate on interval $I$ of the base algorithm $\sonarbase(m)$, which resets every $2^m$ steps.
Then, in the second quarter of $\mc{P}_i$ or for $t \in [t_i+|\mc{P}_i|/4, t_i + |\mc{P}_i|/2] \cap \mb{N}$ for $t_i $ being the first round of $\mc{P}_i$, if $(w_t,\rho_t)$ deviates too much from $(w_I,\rho_I)$, \Cref{eq:cpd} will be triggered and \Cref{alg:sonarc} will restart.

Otherwise, if \Cref{eq:cpd} is not triggered within $\mc{P}_i$, we claim that ``good convergence'' of iterates will be maintained on the second half, which will lead to the desired guarantees.
Let $(w_{\lambda}, \rho_{\lambda})$ be the population minimizer to \Cref{eq:regularized-sc} for phase $\mc{P}_i$.
Then, with probability at least $1-\delta$: for all $t \in \{ t \geq t_i\} \cap \mc{P}_i$:
\[
	\|(w_{t}, \rho_{t}) - (w_{\lambda}, \rho_{\lambda})\|_2^2 \leq 2 \left( \norm{ (w_t, \rho_t) - (w_I, \rho_I) }_2^2 + \norm{ (w_I,\rho_I) - (w_{\lambda}, \rho_{\lambda}) }_2^2 \right) \leq C_2 \cdot \frac{\log(T) \cdot \log(1/\delta)}{|\mc{P}_i|} \, ,
\]
for sufficiently large $C_2>0$.
This ``convergence guarantee'' allows to repeat the proofs of \Cref{cor:type-1-sgd} and \Cref{thm:margin-LB} for the iterates $(w_t,\rho_t)$ in the second half $t \in \{ t\geq t_i \} \cap \mc{P}_i$, given that $|\mc{P}_i| \gtrsim \log(1/\delta) \cdot \log(T) \cdot (\lambda \cdot \eps)^{-2}$.

\section{Auxilliary Lemmas for Proofs}\label{app:aux}

\begin{lemma}\label{prop:pop-norm-LB}
	The solution $(w_{\lambda}, \rho_{\lambda})$ to \Cref{eq:regularized-sc} satisfies $\| w_{\lambda}\|_2 \geq \lambda \cdot r^* / 2$.
\end{lemma}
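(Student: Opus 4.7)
The plan is to reuse the polar-style reparametrization introduced in \Cref{app:proof-small-type-1}: set $r = \rho/\norm{w}_2$, $z = \norm{w}_2$, $v = w/\norm{w}_2$, under which the objective takes the form
\[
F(r,z,v) = z^2 \left( \tfrac{1}{2} + \tfrac{r^2}{2} \right) - \lambda r z + z \cdot L(r,v),
\qquad L(r,v) := \Exp_{X \sim P_X}[(r - v^T X)_+].
\]
For fixed $r,v$ this is a convex quadratic in $z$, so I would first perform the inner minimization in $z$ explicitly. Solving $\partial_z F = 0$ yields the first key identity
\[
\norm{w_\lambda}_2 = z^* = \frac{\lambda r_\lambda - L(r_\lambda, v_\lambda)}{1 + r_\lambda^2},
\]
where the numerator is strictly positive by the optimality analysis already carried out in \Cref{app:proof-small-type-1} (which showed $\lambda r_\lambda > L(r_\lambda, v_\lambda)$).

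Next, I would compare $F$ at $(r_\lambda, v_\lambda)$ against its value at the OCSVM optimum $(r^*, v^*)$ from \Cref{eq:support-solution}. Since $L(r^*, v^*) = 0$, plugging into the reduced objective $F(r,v) = -(\lambda r - L(r,v))^2 / (2(1+r^2))$ gives $F(r^*, v^*) = -(\lambda r^*)^2/(2(1+(r^*)^2))$. Minimality of $(r_\lambda, v_\lambda)$ then yields
\[
\frac{(\lambda r_\lambda - L(r_\lambda, v_\lambda))^2}{1 + r_\lambda^2} \geq \frac{(\lambda r^*)^2}{1 + (r^*)^2},
\]
and taking positive square roots (justified by the positivity established above) gives
\[
\lambda r_\lambda - L(r_\lambda, v_\lambda) \geq \frac{\lambda r^* \sqrt{1 + r_\lambda^2}}{\sqrt{1 + (r^*)^2}}.
\]

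Substituting back into the identity for $\norm{w_\lambda}_2$ then gives
\[
\norm{w_\lambda}_2 \geq \frac{\lambda r^*}{\sqrt{(1 + r_\lambda^2)(1 + (r^*)^2)}}.
\]
To finish, I would invoke \Cref{lem:margin-less-than-1} (stated in this same appendix) to bound $r_\lambda \leq 1$, and note that $r^* \leq r_\lambda \leq 1$ by \Cref{prop:large-margin-property}. The denominator is therefore at most $\sqrt{2} \cdot \sqrt{2} = 2$, yielding $\norm{w_\lambda}_2 \geq \lambda r^* / 2$ as claimed.

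The main conceptual step is the comparison of $F$ at $(r_\lambda, v_\lambda)$ versus $(r^*, v^*)$ to lower bound the numerator $\lambda r_\lambda - L(r_\lambda, v_\lambda)$; the rest is bookkeeping with the reparametrization. The only subtlety I anticipate is the sign check when extracting the square root, which is handled by the positivity statement already proved in \Cref{app:proof-small-type-1}.
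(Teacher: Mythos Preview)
Your proposal is correct and follows essentially the same approach as the paper: both use the reparametrization to obtain the identity $\norm{w_\lambda}_2 = (\lambda r_\lambda - L(r_\lambda,v_\lambda))/(1+r_\lambda^2)$, lower bound the numerator via the comparison $F(r_\lambda,v_\lambda) \le F(r^*,v^*)$, and finish using $r_\lambda \le 1$ from \Cref{lem:margin-less-than-1}. The only cosmetic difference is that the paper first simplifies the comparison inequality (already established in \Cref{app:proof-large-margin}) to $L(r_\lambda,v_\lambda) \le \lambda(r_\lambda - r^*)$, giving the slightly cleaner chain $\norm{w_\lambda}_2 \ge \lambda r^*/(1+r_\lambda^2) \ge \lambda r^*/2$, whereas you keep the square-root form and bound $\sqrt{(1+r_\lambda^2)(1+(r^*)^2)} \le 2$ directly.
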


\begin{proof}
	Recall the following notation from the proof of \Cref{prop:large-margin-property} in \Cref{app:proof-large-margin}.
	Let
	\begin{align*}
		L(r,v) &:= \mb{E}_{X\sim P_X}[ (r - v^TX)_+] \\
		G(r) &:= \frac{r}{(1+r^2)^{1/2}} \, .
	\end{align*}
	Then, from the results of \Cref{app:proof-large-margin}:
	\begin{align*}
		\frac{L(r_{\lambda},v_{\lambda})}{\sqrt{ 1 + r_{\lambda}^2}} &\leq \lambda \cdot ( G(r_{\lambda}) - G(r^*)) = \lambda \cdot \left( \frac{r_{\lambda}}{(1 + r_{\lambda}^2)^{1/2}} - \frac{r^*}{(1+(r^*)^2)^{1/2}} \right) \implies L(r_{\lambda} , v_{\lambda}) \leq \lambda \cdot ( r_{\lambda} - r^*),
	\end{align*}
	where in the last inequality we use the fact that $r_{\lambda} \geq r^*$ from \Cref{prop:large-margin-property}.
	Then, from \Cref{app:proof-small-type-1} we know the norm of the optimal $w_{\lambda}$ satisfies:
	\[
		\|w_{\lambda}\|_2 = \frac{ \lambda \cdot r_{\lambda} - L(r_{\lambda},v_{\lambda}) }{ 1 + r_{\lambda}^2 } \geq \frac{ \lambda r^*}{ 1 + r_{\lambda}^2 } \geq \frac{\lambda r^*}{2} \, .
	\]
\end{proof}

\begin{lemma}\label{lem:margin-less-than-1}
	The population minimizer $(w_{\lambda},\rho_{\lambda})$ of \Cref{eq:regularized-sc} satisfies $r_{\lambda} := \frac{\rho_{\lambda}}{\norm{w_{\lambda}}_2 } \leq 1$.
\end{lemma}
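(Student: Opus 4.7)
The plan is a short proof by contradiction that leverages Theorem~\ref{prop:small-type-1} (Small Type I Error) together with the unit-sphere constraint on the data from Assumption~\ref{ass:unit-sphere}. The key observation is that if the margin $r_\lambda$ exceeds $1$, then the separating hyperplane lies strictly beyond the data support, forcing every sample to violate the threshold and driving the Type I error to $1$, which conflicts with the $<\lambda \le 1$ bound.

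Concretely, I would first note that $\|w_\lambda\|_2 > 0$, so that $r_\lambda = \rho_\lambda/\|w_\lambda\|_2$ is well-defined; this has already been established in the proof of Theorem~\ref{prop:small-type-1} (which rules out $w=0$ at the optimum by exhibiting a feasible pair with strictly negative objective value). Then I would suppose for contradiction that $r_\lambda > 1$, i.e. $\rho_\lambda > \|w_\lambda\|_2$. By Cauchy--Schwarz and the fact that $\|X\|_2 = 1$ for $X \in \mathrm{supp}(P_X)$,
\[
\langle w_\lambda, X\rangle \;\le\; \|w_\lambda\|_2 \cdot \|X\|_2 \;=\; \|w_\lambda\|_2 \;<\; \rho_\lambda
\]
for every $X$ in the support, so $\mathbb{P}_{X \sim P_X}(\langle w_\lambda, X\rangle < \rho_\lambda) = 1$.

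On the other hand, Theorem~\ref{prop:small-type-1} gives $\mathbb{P}_{X \sim P_X}(\langle w_\lambda, X\rangle < \rho_\lambda) < \lambda \le 1$, directly contradicting the identity above. Hence $r_\lambda \le 1$, as claimed.

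There is no real obstacle here beyond recognizing that the unit-sphere assumption already caps the achievable margin, so that the Neyman--Pearson-style bound from Theorem~\ref{prop:small-type-1} does all the work. An alternative route, which I would use only if one wanted a self-contained proof independent of Theorem~\ref{prop:small-type-1}, is to apply the first-order optimality condition in $\rho$: under the contradictory assumption $r_\lambda > 1$, the subdifferential $\partial_\rho F(w_\lambda,\rho_\lambda)$ collapses to $\{\rho_\lambda - \lambda + 1\}$, forcing $\rho_\lambda = \lambda - 1 \le 0$, which contradicts $\rho_\lambda = r_\lambda\|w_\lambda\|_2 > 0$ (the latter following from Theorem~\ref{prop:large-margin-property}, which gives $r_\lambda \ge r^* > 0$).
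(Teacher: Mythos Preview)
Your argument is correct. Both routes you propose---the main one via \Cref{prop:small-type-1} and the alternative via first-order optimality in $\rho$---go through without issue, and there is no circularity since neither \Cref{prop:small-type-1} nor \Cref{prop:large-margin-property} relies on this lemma. (A minor remark: in your alternative route you do not actually need \Cref{prop:large-margin-property} to conclude $\rho_\lambda > 0$; the contradictory hypothesis $r_\lambda > 1$ together with $\|w_\lambda\|_2 > 0$ already gives it.)

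The paper takes a different, self-contained route: it shows directly that for \emph{any} $(w,\rho)$ with $\rho > \|w\|_2$, replacing $\rho$ by $\|w\|_2$ strictly decreases the objective. The computation uses the same Cauchy--Schwarz observation you made---that $\rho > \|w\|_2$ forces $(\rho - w^TX)_+ = \rho - w^TX$ and likewise for $\|w\|_2$---but then simply evaluates
\[
F(w,\rho) - F(w,\|w\|_2) = (\rho - \|w\|_2)\left( \tfrac{\rho + \|w\|_2}{2} + 1 - \lambda \right) > 0
\]
for $\lambda \le 1$. This variational argument is slightly more elementary in that it does not invoke any previously established theorem, whereas your main route is shorter precisely because it outsources the work to \Cref{prop:small-type-1}. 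Your alternative route is closest in spirit to the paper's: both exploit that the hinge becomes linear when $\rho > \|w\|_2$, but the paper compares objective values directly while you pass through the stationarity condition.
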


\begin{proof}
	Recall the objective
	\[
		F(w,\rho) = \half ( \norm{w}_2^2 + \rho^2) - \lambda \rho + \mb{E}_{X\sim P_X}[ (\rho - w^TX)_+ ] \, .
	\]
	We claim that for any $(w,\rho)$ such that $\rho > \norm{w}_2$, the parameter $(w,\norm{w}_2)$ has strictly smaller objective value.
	First, note since $\norm{X}_2=1$ for any $X$, by Cauchy-Schwarz:
	\[
		\rho - w^TX \geq \rho - \norm{w}_2 > 0 \, .
	\]
	Thus, $(\rho - w^TX)_+ = \rho - w^TX$ and also $(\norm{w}_2 - w^TX)_+ = \norm{w}_2 - w^TX$.
	Then,
	\[
		\mb{E}_{X\sim P_X}[ (\rho - w^TX)_+ - (\norm{w}_2 - w^TX)_+ ] = \rho - \norm{w}_2 \, .
	\]
	Then,
	\[
		F(w,\rho) - F(w,\norm{w}_2 ) = \half (\rho^2 - \norm{w}_2^2) - \lambda ( \rho - \norm{w}_2 ) + (\rho - \norm{w}_2 ) = (\rho - \norm{w}_2) \cdot \left( \frac{\rho + \norm{w}_2 }{2} + 1 - \lambda \right) \, .
	\]
	This last expression is strictly positive for $\lambda \leq 1$.
\end{proof}

\begin{lemma}\label{lem:elementary-fraction}
	Let $a,b,x,\eps > 0$ with $a \leq b$ and $\eps < 1$.
	Then,
	\[
		x \leq b \cdot \eps / 2 \implies	\frac{a - x}{b+x} \geq \frac{a}{b} - \eps \, .
	\]
\end{lemma}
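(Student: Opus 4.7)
The plan is to reduce the claim to a short algebraic computation on the difference of the two fractions. Specifically, I would first rewrite
\[
\frac{a-x}{b+x} - \frac{a}{b} = \frac{b(a-x) - a(b+x)}{b(b+x)} = -\frac{x(a+b)}{b(b+x)},
\]
which is a standard common-denominator manipulation. So the inequality $\frac{a-x}{b+x} \geq \frac{a}{b} - \eps$ is equivalent to showing
\[
\frac{x(a+b)}{b(b+x)} \leq \eps.
\]

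Next, I would upper-bound this quantity using the two hypotheses. From $a \leq b$ we get $a+b \leq 2b$, which cancels one factor of $b$ in the denominator, leaving $\frac{x(a+b)}{b(b+x)} \leq \frac{2x}{b+x} \leq \frac{2x}{b}$, where the last step uses $x > 0$ so that $b+x \geq b$. Then the hypothesis $x \leq b\eps/2$ directly gives $\frac{2x}{b} \leq \eps$, completing the proof.

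There is no real obstacle here; the only thing to be careful about is that the right-hand side $\frac{a}{b} - \eps$ may be negative, so one should avoid dividing by it or cross-multiplying by it without checking signs. Working with the difference (rather than a ratio) and with $b+x > 0$ and $b > 0$ (both strictly positive by hypothesis) sidesteps this issue entirely, so the whole argument is two lines of elementary inequalities.
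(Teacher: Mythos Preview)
Your proof is correct and in fact cleaner than the paper's. The paper argues backwards: starting from $a \leq b$ it derives $\frac{b^2\eps}{a+b(1-\eps)} \geq \frac{b\eps}{2-\eps} \geq \frac{b\eps}{2} \geq x$, then unwinds this through a chain of algebraic rearrangements to reach $b(a-x) \geq (a-b\eps)(b+x)$, which after dividing by $b(b+x)$ gives the claim. Your approach instead computes the difference $\frac{a-x}{b+x} - \frac{a}{b} = -\frac{x(a+b)}{b(b+x)}$ directly and bounds its absolute value in two transparent steps. This is shorter, avoids the somewhat opaque backward chain, and makes clear exactly where each hypothesis enters (and, incidentally, shows that the assumption $\eps < 1$ is not actually needed). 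Your remark about avoiding division by $\frac{a}{b}-\eps$ is also a nice observation about why working with the difference is the right move.
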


\begin{proof}
	Since $a \leq b$, we have
	\[
		a + b \cdot (1 - \eps) \leq b \cdot (2 - \eps) \, .
	\]
	Then,
	\[
		\frac{b^2 \eps}{a + b \cdot (1-\eps)} \geq \frac{b^2 \eps}{b \cdot (2-\eps)} = \frac{b\eps}{2-\eps} \geq \frac{b \eps}{2} \geq x \, .
	\]
	This gives
	\[
		-ax - bx + b \eps x \geq -b^2 \eps \implies ab - bx \geq ab - b^2 \eps + ax  - b \eps x \implies b (a-x) \geq (a - b \eps) \cdot (b + x) \, .
	\]
	Rearranging this last inequality, we arrive at $\frac{a-x}{b+x} \geq \frac{a}{b} - \eps$.
\end{proof}

\section{Experiments on Synthetic Data}

\subsection{Implementation Details.}\label[appendix]{subsec:implementation}
We use $\lambda := 0.01$ and $\frac{D}{r_{\min}^2} \log\left( \frac{D}{\delta \cdot r_{\min}^2} \right)$ Random Fourier Features for $r_{\min} = 0.5$ and $\delta := \lambda/2$, following the theoretical bound of \Cref{app:features}.
This approximates an RBF kernel with $\gamma=0.5$.

In each experiment, we track the following over time $t$: (1) the {\em online Type I error} defined as the normalized mistake count $\frac{1}{t} \sum_{s=1}^t \pmb{1}\{ \langle w_s, \vphi(X_s) \rangle < \rho_s \}$, and (2) the current iterate's margin $r_t := \frac{\rho_t}{\norm{w_t}}$.
We report averages of these quantities over $20$ different runs, using random draws of RFF's.

For SGD, we use a time-varying AdaGrad-style learning rate, and also use Bottou's heuristic \citep{bottou2012stochastic} to further stabilize the learning rate for early steps.
The code is available at \texttt{\href{https://github.com/joesuk/SONAR}{https://github.com/joesuk/SONAR}}.

\subsection{Comparing SONAR and OCSVM on Stationary Environment}


\paragraph*{SONAR vs OCSVM on Stationary Environment.}
First, we analyze a simple stationary two-cluster environment and show \sonar and OCSVM have comparable performance in terms of Type I error and learned margin.
We generate $T=20000$ data points from a Gaussian mixture of clusters centered at $(-2,2)$ and $(2,-2)$ with standard deviation $0.3$.

From \Cref{fig1}, we see that \sonar achieves slightly better online Type I error and similar learned margin $r_t$ as SGD applied to the OCSVM objective \Cref{eq:obj-rff}, and that the final learned decision boundaries (shown for a sample run) are similar.

\begin{figure}[H]
	\includegraphics[width=\linewidth]{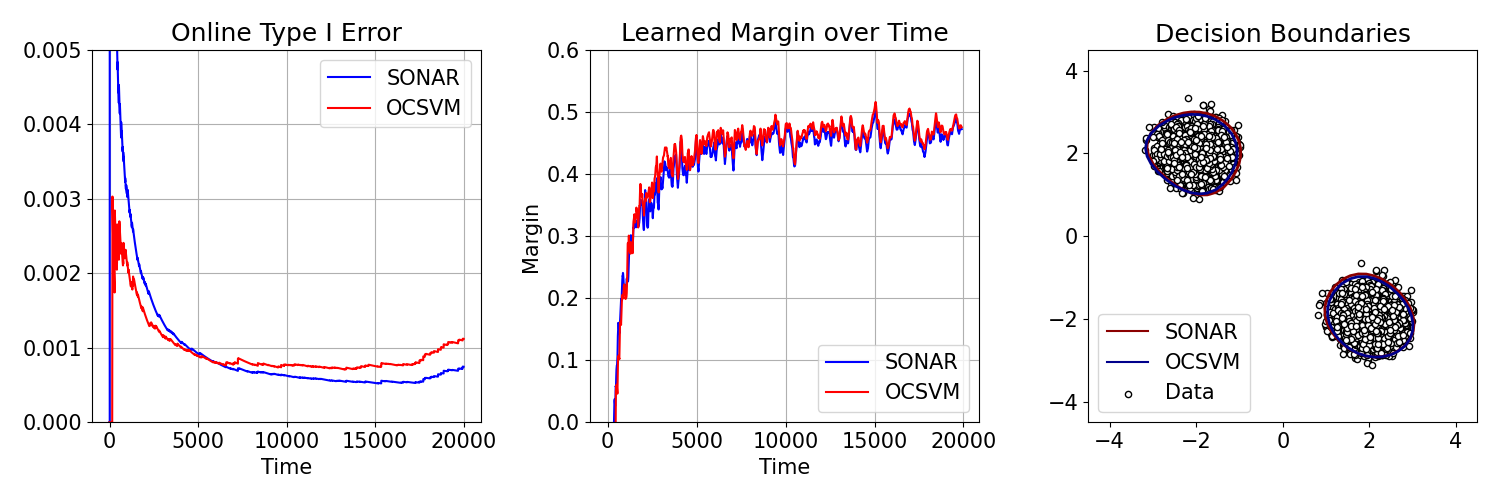}
	\caption{Online Type I Error, (smoothed) margin plot, and final decision boundaries of \sonar and SGD-OCSVM.}
	\label{fig1}
\end{figure}

\paragraph*{SONAR vs OCSVM on Mildly Non-Stationary Environment.}
We next study a non-stationary environment with $L=4$ phases, each of which consists of $10000$ datapoints drawn from a $2$-cluster Gaussian mixture model with standard deviation $0.3$ and centers (listed in order):
\begin{enumerate}[1.]
	\item $\{ (-2, 2), (2, -2) \}$.
	\item $\{ (-2, -2), (2, 2) \}$.
	\item $\{ (2, 2), (2, -2) \}$.
	\item $\{ (-2, 2), (-2, -2) \}$.
\end{enumerate}
As the number of changes $L$ is small here, it turns out that running a non-restarting procedure suffices for adaptivity.
We again run \sonar and SGD applied to the OCSVM objective \Cref{eq:obj-rff}.
From \Cref{fig2}, we see \sonar has superior online Type I error in later phases and learns a tighter decision boundary, as shown via a snapshot of acceptance regions.

\begin{figure}[H]
	\includegraphics[width=\linewidth]{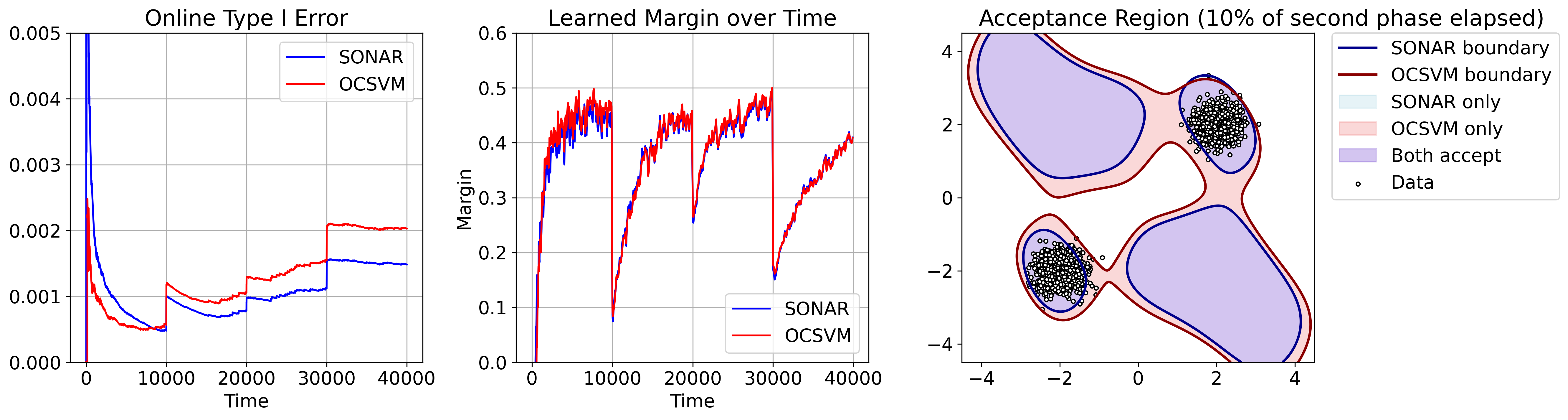}
	\caption{Online Type I Error, (smoothed) margin plot, and snapshot acceptance regions of \sonar and SGD-OCSVM.}
	\label{fig2}
\end{figure}

\subsection{Lifelong Learning Guarantees of \sonar: Benefiting from the Past}

We next study a non-stationary environment with $L=2$ phases, dubbed ``source'' and ``target''.
Each phase is drawn from a $2$-cluster Gaussian mixture model, with centers $(-2,-2)$ and $(2,2)$, with only the standard deviation changing from $0.6$ in source to $0.3$ in target.
We draw $10000$ datapoints per phase.
From \Cref{fig4}, we see \sonar maintains lifelong guarantees in terms of online Type I error and and maintins a high margin without requiring a restart.
The restarted \sonar is also unnecessarily conservative in learning the target boundary, as can be seen from depictions of the acceptance regions for both procedures with only $10\%$ of the target phase elapsed.

Altogether, these findings validate our theoretical findings in \Cref{thm:transfer-type-1} that benign distribution shifts admit fast learning rates on target from source.

%
%
%


\begin{figure}[H]
	\includegraphics[width=\linewidth]{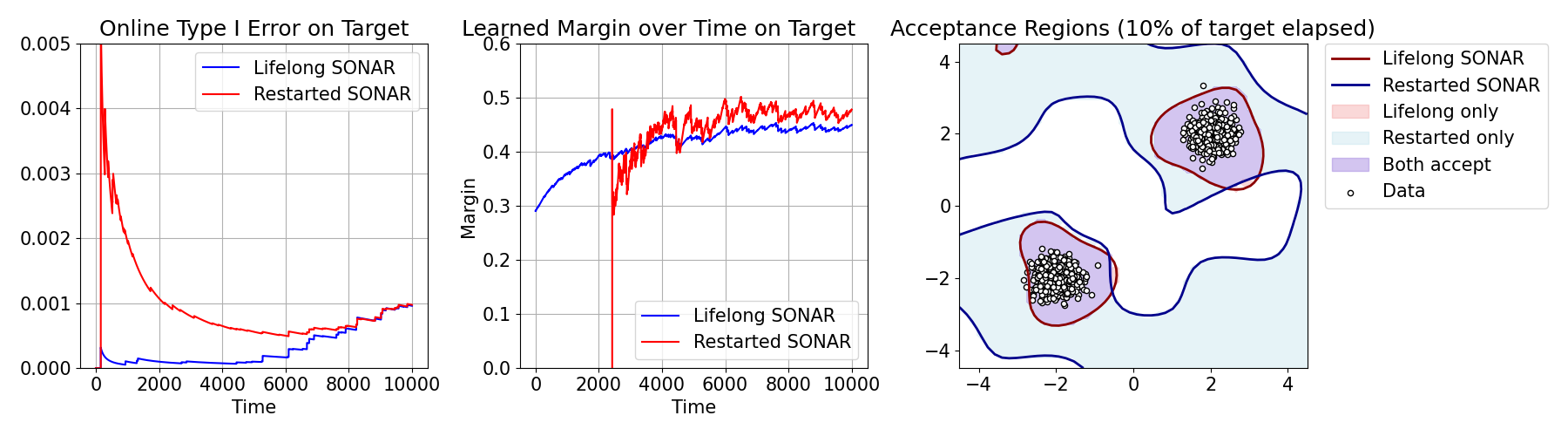}
	\caption{Online Type I Error, (smoothed) margin plot, and acceptance regions for at $10\%$ of the target phase elapsed.}
	\label{fig4}
\end{figure}

\subsection{Showing \sonarc Can Adapt to Non-Stationary Environments}\label[appendix]{subsec:exp-synthetic-nonstat}

We next study a non-stationary environment with $L=10$ phases each of which concists of $10,000$ datapoints from a $2$-cluster Gaussian mixture model with standard deviation $0.3$, and centers drawn uniformly at random from $[-5,5]^2$.
We run four procedures:
\begin{enumerate}[(a)]
	\item \sonar with no restarts, a.k.a. ``lifelong \sonar''.
	\item An oracle restarting variant of \sonar which resets its learning rate at the $9$ changepoints.
	\item \sonarc.
	\item A restarting \sonar equipped with an off-the-shelf multivariate Gaussian streaming change-point detector \citep[MDFocus]{changepoint_online,pishchagina2023online}. 
\end{enumerate}

For both \sonarc and \ttt{MDFocus}, we tuned the thresholds for changepoint detection by selecting the extremal threshold value which do not trigger in stationary data drawn from randomly generated $2$-cluster distributions of the same kind.

From the results in \Cref{fig5}, we see that the adaptive algorithms closely match the performance of the oracle procedure in terms of online Type I error and learned margin.
Meanwhile, the lifelong \sonar (depicted in blue) has worse Type I error and smaller margin.
\sonarc detected an average of $8.85$ changes, while \ttt{MDFocus} detected $9$ changes.

\begin{figure}[H]
	\includegraphics[width=\linewidth]{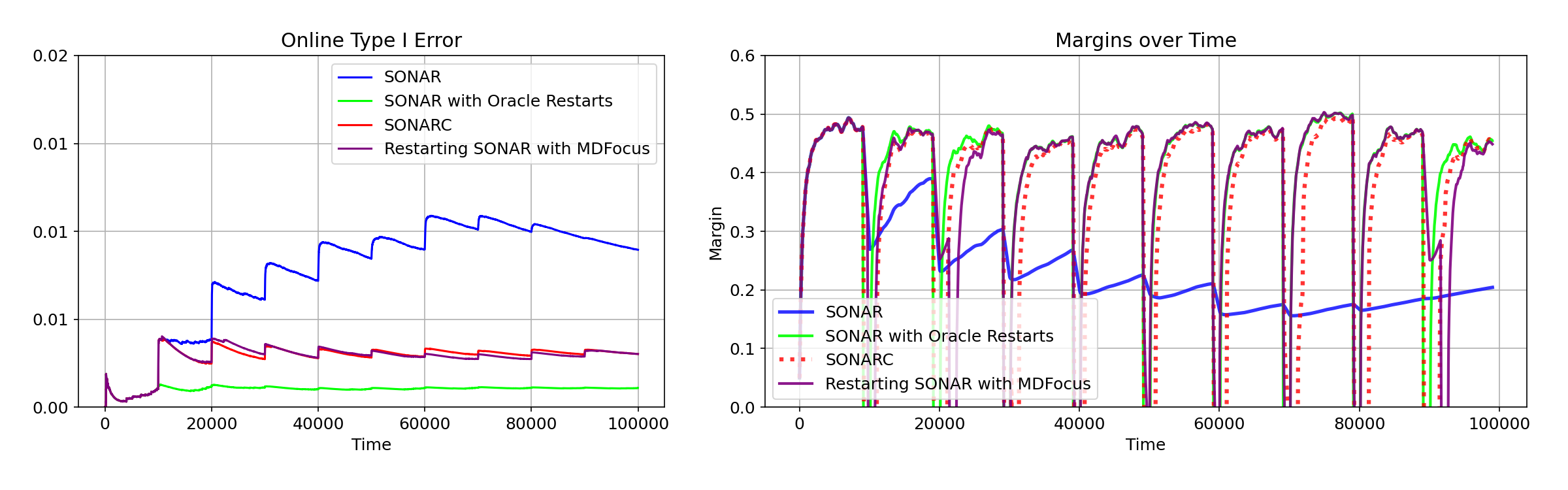}
	\caption{Online Type I Error and (smoothed) margin plot. }
	\label{fig5}
\end{figure}


\subsection{Demonstrating Off-the-Shelf Changepoint Detection can be Overly Conservative}\label[appendix]{subsec:off-the-shelf}

We next show that off-the-shelf changepoint detection (i.e., \ttt{MDFocus} used in \Cref{subsec:exp-synthetic-nonstat}) can restart unnecessarily.
Specifically, \sonarc's changepoint detector \Cref{eq:cpd} only triggers when the decision boundary of the data changes and is robust to shifts in the makeup of the support of normal data $P_X$.

Indeed, our experiment showcases such a scenario.
In each of two phases, we draw $5000$ datapoints from a $2$-dimensional Gaussian with identity covariance, truncated to the unit disk.
The phases only differ in the choice of the center of the Gaussian, set as the origin $(0,0)$ in the first phase and $(0.75,0)$ in the second.
This shift moves mass from the origin toward the new center.
A scatterplot of the data is found in the first subplot of \Cref{fig6}.

For both \sonarc and \ttt{MDFocus}, we tuned the thresholds for detection by selecting the most extremal threshold value which does not trigger in stationary data drawn independently from either of the two phase distributions.

Setting $\lambda = 0.01$, \ttt{MDFocus} detected one change whereas \sonarc detected zero changes, and both procedures secured a small final online Type I error of $0.0003$.
We also see that, due to \ttt{MDFocus}'s unnecessary restart, one must re-learn the margin and decision boundary, as seen in the second and third subplots of \Cref{fig6}.

\begin{figure}[H]
	\includegraphics[width=\linewidth]{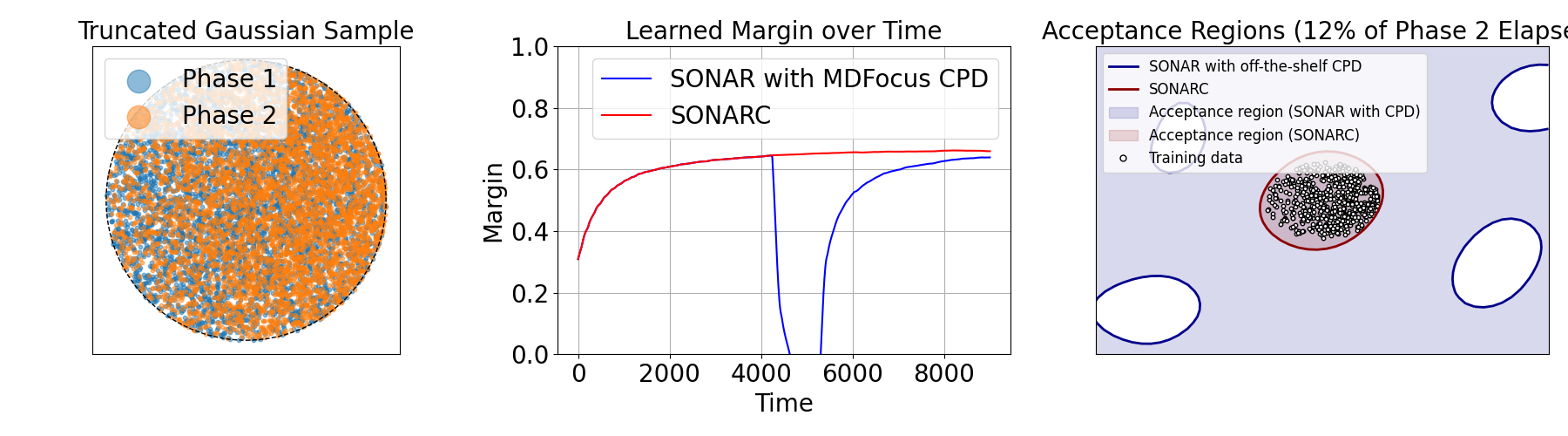}
	\caption{Scatterplot of data, margin plot, and and sample acceptance regions.}
	\label{fig6}
\end{figure}

\section{Experiments on Real-World Data}

We apply \sonar, \sonarc, and OCSVM applied to \Cref{eq:objective-unconstrained} to two real-world datasets, each of which has streaming data sorted in time and includes labelled outlier data.
We use the same setup as \Cref{subsec:implementation} with different choices of $\lambda$ for each dataset.

\paragraph*{Error Metrics.}
We focus on assessing the (normalized) {\em online cumulative Type I error} which is measured, at round $t$, as the normalized mistake count $\frac{1}{N_t} \sum_{s=1}^t \pmb{1}\{ \langle w_s, \vphi(X_s) \rangle < \rho_s \text{ and $X_s$ is normal datapoint} \}$, where $N_t$ is the number of normal data points seen till time $t$..
We also assess the {\em online cumulative Type II error} which is defined similarly: $\frac{1}{t-N_t}\sum_{s=1}^t \pmb{1}\{ \langle w_s, \vphi(X_s) \rangle \geq \rho_s \text{ and $X_s$ is an outlier}\}$.
Note that labeled outlier datapoints are {\em not} used for training.

In the non-stationary streaming setting, these online Type I and II error metrics are appropriate as the learner must adapt on-the-fly to sequential data and changing distributions.

\subsection{Evaluation on Skoltech Anomaly Benchmark (SKAB) Dataset}

The SKAB repository consists of multivariate time series data collected in 2020 from sensors installed on a water circulation loop testbed \citep[Skoltech Anomaly Benchmark]{skab}.
We concatenated the \ttt{valve1} and \ttt{valve2} datasets from SKAB, which both have $8$ features.
We set $\lambda = 0.005$ (or $0.5\%$ Type I error) and
standardized each datapoint $X_t$ using running estimates of the mean and variance up to time $t$.
We chose the threshold \Cref{eq:cpd} for changepoint detection in \sonarc from $\{ 10^{-n} \}_{n \in \mb{N}}$, setting it to be the largest value that detects at least one change.

From the first two plots of \Cref{fig:valve1}, we see that all three procedures perform similarly in terms of online Type I error, while \sonar and \sonarc have better Type I error than OCSVM.
Furthermore, \sonarc dramatically improves on online Type II error in early steps where it detected changes, at the cost of slightly higher Type I error (but well within range of the user-set $\lambda = 0.005$).
The improvement in Type II error is also reflected in the comparison of margin iterates $r_t$ in the third plot: \sonar and \sonarc reinforce a larger margin, aligning with our theory (\Cref{prop:large-margin-property,thm:margin-LB}).


\begin{figure}[H]
	\includegraphics[width=\linewidth]{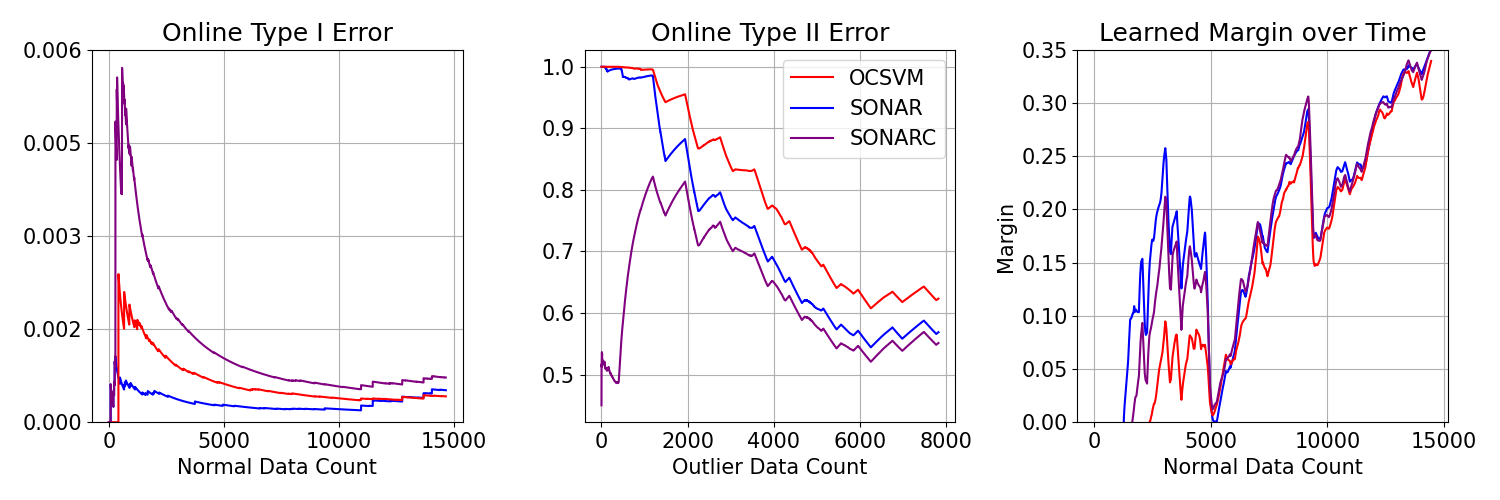}
	\caption{SKAB results: margin curves are smoothed averages.
	\sonarc detected on average $1.6$ changes.
	}
	\label{fig:valve1}
\end{figure}


\paragraph*{Comparison with Other Procedures.}
The official repository for SKAB reports a Kaggle leaderboard with batch Type I and II errors on held-out test datasets \citep{skab}.
The state-of-art consists of deep learning-based methods such as LSTM and Conv-AE as well as non-parametric methods such as MSET and Isolation Forest.
None of these methods are, without further modification, applicable to our one-pass streaming setting.
In spite of this, we find that our experiment's \sonarc has final online Type I error of $7.2 \times 10^{-4}$ and online Type II error of $0.551$, which compare favorably to many of the reported arts.

We do caution that this is not a full apples to apples comparison as the error metrics are computed differently.
Arguably, our online Type I and II errors are akin to test errors because evaluation is done at each step using an unseen datapoint.
In another sense, our online error metrics could be considered more challenging than simply computing errors on a test holdout dataset as the online metrics average over all steps, including earlier steps where convergence is slow.

The {\em final iterate's offline Type I or II error} on the entire dataset for \sonar is, respectively, $38\%$ and $26.6\%$, which yields an F1-score of $0.6$ for this problem.
This is also within the range of the state-of-the-art in the leaderboards.

\subsection{Aposemat IoT-23 Dataset}

The Aposemat IoT-23 repository is a labeled collection of network traffic data from real-world IoT devices, such as a Philips HUE smart LED lamp, an Amazon Echo home intelligent personal assistant and a Somfy Smart Door Lock \citep{aposemat23}.
Malware was executed on these devices using a Raspberry Pi, giving normal and outlier labels on flows.
Among the various captures in Aposemat IoT-23, we chose the \ttt{CTU-IoT-Malware-Capture-33-1} dataset for its abundance of normal data.
We processed the raw flow data using tools found in the \ttt{netml} library introduced by \citet{yang2020comparative}.
The 33-1 dataset has one feature after this processing.
As in our previous experiment on the SKAB dataset, we standardize each datapoint $X_t$ using running estimates of the mean and variance up to time $t$.

For $\lambda = 0.1$, we plot the online Type I and II error curves over observed normal/outlier data count, as well as the margin iterates.
From these plots we see that \sonar and OCSVM exhibit a tradeoff in that \sonar has smaller Type I error, while OCSVM has smaller Type II error.
We also implemented \sonarc on this dataset but it did not restart for any threshold values, indicating that the dataset is fairly stationary.
This can also be intuited from the margin plot which shows steady convergence to the margin for both algorithms.



\begin{figure}[H]
	\includegraphics[width=\linewidth]{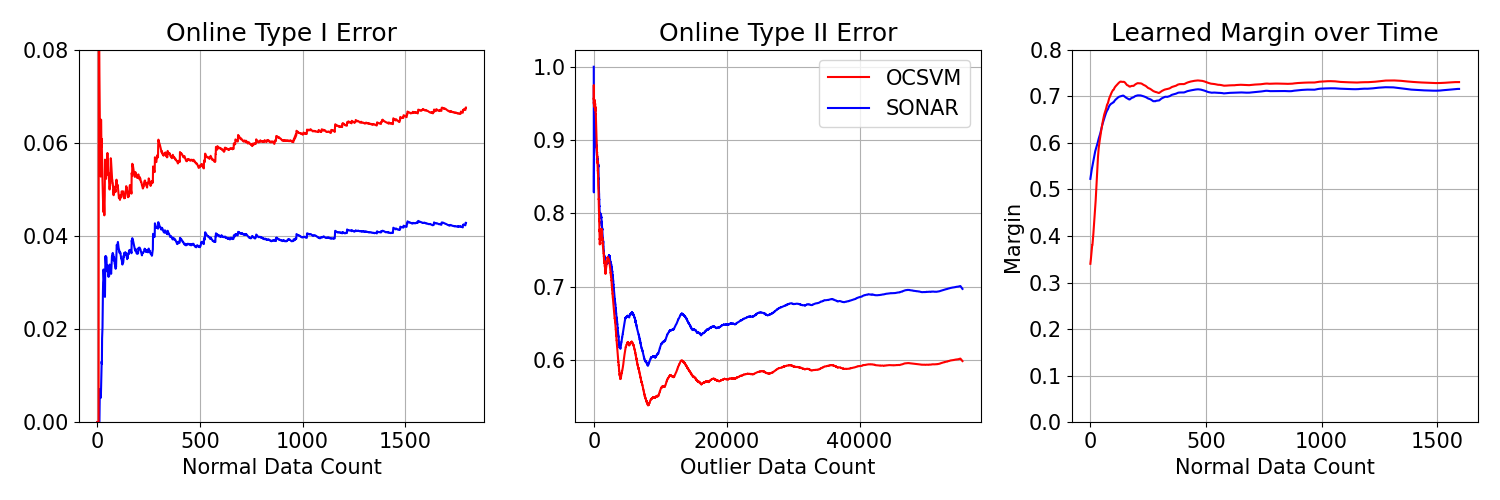}
	\caption{\ttt{CTU-IoT-Malware-Capture-33-1} results: margin curves are smoothed averages.}
	\label{fig:33-1}
\end{figure}

\end{appendices}

\end{document}